\DeclareMathOperator\argmin{arg\,min}
\DeclareMathOperator\argmax{arg\,max}
\DeclareMathOperator\sgn{sgn}
\DeclareMathOperator\pr{Pr}
\newtheorem{thm}{Theorem}
\newtheorem{defi}{Definition}
\newtheorem{prop}{Proposition}
\newtheorem{remk}{Remark}
\begin{document}

%

%
\runningauthor{Jincheng Mei, Hao Zhang, Bao-Liang Lu}

\twocolumn[

\aistatstitle{On the Reducibility of Submodular Functions}

\aistatsauthor{ Jincheng Mei \And Hao Zhang \And Bao-Liang Lu$^\star$ }

\aistatsaddress{ Department of Computing Science \\ University of Alberta \\ Edmonton, AB, Canada, T6G 2E8 \\ \And The Robotics Institute \\ School of Computer Science \\ Carnegie Mellon University \\ Pittsburgh, PA 15213, USA \And Key Lab of SMEC for IICE \\ Dept. of Computer Sci. and Eng. \\ Shanghai Jiao Tong University \\ Shanghai 200240, China} ]

\begin{abstract}
The scalability of submodular optimization methods is critical for their usability in practice. In this paper, we study the \emph{reducibility} of submodular functions, a property that enables us to reduce the solution space of submodular optimization problems without performance loss. We introduce the concept of reducibility using \emph{marginal gains}. Then we show that by adding perturbation, we can endow irreducible functions with reducibility, based on which we propose the \emph{perturbation-reduction} optimization framework. Our theoretical analysis proves that given the perturbation scales, the reducibility gain could be computed, and the performance loss has additive upper bounds. We further conduct empirical studies and the results demonstrate that our proposed framework significantly accelerates existing optimization methods for irreducible submodular functions with a cost of only small performance losses.
\end{abstract}

\section{INTRODUCTION}
\label{sec:intro}
Submodularity naturally arises in a number of machine learning problems, such as active learning \cite{golovin2011adaptive}, clustering \cite{narasimhan2005q}, and dictionary selection \cite{das2011submodular}. The scalability of submodular optimization methods is critical in practice, thus has drawn much attention from the research community. For example, Iyer et al. \cite{iyer2013fast} propose $\mathcal{O}(n^2)$ general optimization methods based on the semidifferential. Wei et al. \cite{wei2014fast} combine approximation with pruning to accelerate the greedy algorithm for uniform matroid constrained submodular maximization. Mirzasoleiman et al. \cite{mirzasoleiman2013distributed} and Pan et al. \cite{pan2014parallel} use distributed implementation to accelerate existing optimization methods. Other techniques, including stochastic sampling \cite{mirzasoleiman2013lazier} and decomposable assumption \cite{jegelka2013reflection}, are also applied to scale up submodular optimization methods.
While in this paper, we focus on the reducibility of submodular functions, a favourable property that can substantially improve the scalability of submodular optimization methods. The reducibility can directly reduce the solution space of the submodular optimization problems, while preserve all the optima in the reduced space, thereby enables us to accelerate the optimization process without incurring performance loss.

Recent research shows that for some submodular functions, by evaluating marginal gains, reduction can be applied for unconstrained maximization \cite{goldengorin2009maximization}, unconstrained minimization \cite{fujishige2005submodular,iyer2013fast}, and uniform matroid constrained maximization \cite{wei2014fast}. By leveraging the reducibility, a variety of methods have been developed to scale up the optimization of \emph{reducible} submodular functions \cite{fujishige2005submodular,goldengorin2009maximization,iyer2013fast,mei2015unconstrained},
While existing works mainly focus on reducible functions, there exist a number of \emph{irreducible} submodular functions widely applied in practice, for which existing methods can only provide vacuous reduction.

In this paper, we investigate the problem that whether irreducible functions can also exploit this favorable property.
We firstly introduce the concept of reducibility using marginal gains over the endpoint sets of a given lattice. Then for irreducible functions, we transform them to reducible functions by adding random noise to perturb the marginal gains, after which we perform lattice reduction for the perturbed functions and solve the original functions on the reduced lattice. Theoretical results show that given the perturbation scales, the reducibility gain is lower bounded, and the performance loss has additive upper bounds. The empirical results demonstrate that there exist useful perturbation scale intervals in practice, which enables us to significantly accelerate existing optimization methods with small performance losses.

In summary, this paper has the following contributions. Firstly, we introduce the concept of reducibility, and propose the perturbation-reduction framework. Secondly, we theoretically analyze our proposed method. In particular, for the reducibility gain, we propose a lower bound in terms of the perturbation scale. For the performance loss, we propose both deterministic and probabilistic upper bounds. The deterministic bound provides the understanding of relationship between the reducibility gain and performance loss, while the probabilistic bound can explain the experimental results. Finally, we empirically show that the proposed method is applicable for a variety of commonly used irreducible submodular functions.

In the sequel, we organize the paper as follows. In Section \ref{sec:reducibility}, we introduce the definitions and the existing reduction algorithms. In Section \ref{sec:perturbation}, we propose our perturbation based method. Theoretical analysis and empirical results are presented in Section \ref{sec:theory} and Section \ref{sec:results}, respectively. In Section \ref{sec:related_work}, we review some related works. Section \ref{sec:conclusion} comes to our conclusion.

\section{REDUCIBILITY}
\label{sec:reducibility}
\subsection{Notations and Definitions}

Given a finite set $N = \{ 1, 2, \dots, n\}$, and a set function $f: 2^N \mapsto \mathbb{R}$. $f$ is said to be \emph{submodular} \cite{fujishige2005submodular} if $\forall X, Y \subseteq N$, $f(X) + f(Y) \ge f(X \cap Y) + f(X \cup Y)$. An equivalent definition of submodularity is the \emph{diminishing return} property: $\forall A \subseteq B \subseteq N$, $\forall i \in N \setminus B$, $f(i | A) \ge f(i | B)$, where $f(i | A) \triangleq f(A + i) - f(A)$ is called the \emph{marginal gain} of element $i$ with respect to set $A$. To simplify the notation, we denote $A \cup \{ i \}$ by $A + i$, and $A \setminus \{ i \}$ by $A - i$. Given $ A \subseteq B \subseteq N$, the \emph{(set interval) lattice} is defined as $[A,B] \triangleq \{ S \ | \ A \subseteq S \subseteq B \}$.

Suppose $N = \{ 1, 2, \dots, n\}$, and $f: 2^N \mapsto \mathbb{R}$ is a submodular function. In this paper, we focus on unconstrained submodular optimization problems,
\begin{equation*}
    \text{Problem 1}: \ \min_{X \subseteq N}{f(X)}, \quad \text{Problem 2}: \ \max_{X \subseteq N}{f(X)}.
\end{equation*}
Problem 1 can be exactly solved in polynomial time \cite{orlin2009faster}, while Problem 2 is NP-hard since some of its special cases (e.g., Max Cut) are NP-hard.

For convenience of the presentation, we will use P1 and P2 to refer to Problem 1 and Problem 2, respectively. Similarly, for the following algorithms, we will use A1 and A2 to refer to Algorithm 1 and Algorithm 2. The reference holds for P3, P4, A3 and A4.

Define $\mathcal{X}_{min} \triangleq \{ X_* \subseteq N \ |\ f(X_*) \le f(X), \forall X \subseteq N \}$ as the optima set of P1. Similarly, define $\mathcal{X}_{max} \triangleq \{ X^* \subseteq N \ |\ f(X^*) \ge f(X), \forall X \subseteq N \}$. Obviously, we have $\mathcal{X}_{min} \subseteq [\emptyset,N]$ and $\mathcal{X}_{max} \subseteq [\emptyset,N]$.

We now give the definition of reducibility. For P1 (P2), we say the objective function $f$ is \emph{reducible for minimization (maximization)} if $\exists \ [S,T] \subset [\emptyset,N]$, where $[S,T]$ can be obtained in $\mathcal{O}(n^p)$ function evaluations, such that $\mathcal{X}_{min} \subseteq [S,T]$ ($\mathcal{X}_{max} \subseteq [S,T]$). Note that if we can only find $[S,T]$ in $\mathcal{O}(2^n)$ time, the reduction is meaningless since $\mathcal{O}(2^n)$ time is enough for us to find all the optima. The ratio $1- \frac{|T \setminus S|}{|N|} \in (0,1]$ is called the \emph{reduction rate}.

\subsection{Algorithms}

Existing works on reduction for unconstrained submodular optimization can be summarized by the following two algorithms, both of which terminate in $\mathcal{O}(n^2)$ time. The brief review of existing works can be found in Section 6.

\begin{minipage}[h]{0.48\textwidth}
\begin{algorithm}[H]
\caption{Reduction for Minimization}
\label{algo1}
\begin{algorithmic}[1]
\Require
$N$, $f$, $X_0 \leftarrow \emptyset$, $Y_0 \leftarrow N$, $t \leftarrow 0$.
\Ensure
$[X_t,Y_t]$.
\State Find $U_t = \{i \in Y_t \setminus X_t \ |\  f(i|X_t) < 0\}$.
\State $X_{t+1} \leftarrow X_t \cup U_t$.
\State Find $D_t = \{j \in Y_t \setminus X_t \ |\ f(j|Y_t - j) > 0\}$.
\State $Y_{t+1} \leftarrow Y_t \setminus D_t$.
\State If $X_{t+1} = X_t$ and $Y_{t+1} = Y_t$, terminate.
\State $t \leftarrow t+1$. Go to Step $1$.
\end{algorithmic}
\end{algorithm}
\end{minipage}
\begin{minipage}[h]{0.48\textwidth}
\begin{algorithm}[H]
\caption{Reduction for Maximization}
\label{algo2}
\begin{algorithmic}[1]
\Require
$N$, $f$, $X_0 \leftarrow \emptyset$, $Y_0 \leftarrow N$, $t \leftarrow 0$.
\Ensure
$[X_t,Y_t]$.
\State Find $U_t = \{i \in Y_t \setminus X_t \ |\  f(i|X_t) < 0\}$.
\State $Y_{t+1} \leftarrow Y_t \setminus U_t$.
\State Find $D_t = \{j \in Y_t \setminus X_t \ |\ f(j|Y_t - j) > 0\}$.
\State $X_{t+1} \leftarrow X_t \cup D_t$.
\State If $X_{t+1} = X_t$ and $Y_{t+1} = Y_t$, terminate.
\State $t \leftarrow t+1$. Go to Step $1$.
\end{algorithmic}
\end{algorithm}
\end{minipage}

\begin{prop}
\label{prop1}
Suppose $f: 2^N \mapsto \mathbb{R}$ is submodular. After each iteration of A1 (A2), we have $\mathcal{X}_{min} \subseteq [X_t,Y_t]$ ($\mathcal{X}_{max} \subseteq [X_t,Y_t]$).
\end{prop}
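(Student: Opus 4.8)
The plan is to prove the invariant by induction on the iteration count $t$. The base case is immediate: at $t = 0$ we have $[X_0, Y_0] = [\emptyset, N]$, and since every optimum is a subset of $N$, both $\mathcal{X}_{min} \subseteq [\emptyset, N]$ and $\mathcal{X}_{max} \subseteq [\emptyset, N]$ hold trivially. For the inductive step I would assume the invariant $\mathcal{X}_{min} \subseteq [X_t, Y_t]$ (resp. $\mathcal{X}_{max} \subseteq [X_t, Y_t]$) and show that every optimum still lies in $[X_{t+1}, Y_{t+1}]$ after the set updates of one iteration. The entire argument rests on the diminishing return formulation of submodularity stated in the excerpt.

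For A1, I fix a minimizer $X_* \in \mathcal{X}_{min}$, so that $X_t \subseteq X_* \subseteq Y_t$ by the hypothesis. To show $X_{t+1} = X_t \cup U_t \subseteq X_*$, I take $i \in U_t$ (so $f(i | X_t) < 0$ and $i \in Y_t \setminus X_t$) and suppose toward contradiction that $i \notin X_*$. Applying diminishing returns with $A = X_t$ and $B = X_*$ (noting $X_t \subseteq X_*$ and $i \in N \setminus X_*$) yields $f(i | X_*) \le f(i | X_t) < 0$, hence $f(X_* + i) < f(X_*)$, contradicting minimality; so $i \in X_*$. To show $X_* \subseteq Y_{t+1} = Y_t \setminus D_t$, I take $j \in D_t$ (so $f(j | Y_t - j) > 0$) and suppose $j \in X_*$; applying diminishing returns with $A = X_* - j$ and $B = Y_t - j$ (noting $X_* - j \subseteq Y_t - j$ and $j \in N \setminus (Y_t - j)$) yields $f(j | X_* - j) \ge f(j | Y_t - j) > 0$, hence $f(X_* - j) < f(X_*)$, again contradicting minimality; so $j \notin X_*$. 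Together these give $X_* \in [X_{t+1}, Y_{t+1}]$.

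The argument for A2 is symmetric: I fix a maximizer $X^* \in \mathcal{X}_{max}$ with $X_t \subseteq X^* \subseteq Y_t$ and run the two analogous contradiction arguments with the inequalities reversed. For the update $Y_{t+1} = Y_t \setminus U_t$, an element $i \in U_t$ lying in $X^*$ would satisfy $f(i | X^* - i) \le f(i | X_t) < 0$, forcing $f(X^*) < f(X^* - i)$ and contradicting maximality; for $X_{t+1} = X_t \cup D_t$, an element $j \in D_t$ not in $X^*$ would satisfy $f(j | X^*) \ge f(j | Y_t - j) > 0$, forcing $f(X^* + j) > f(X^*)$, again a contradiction.

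The only real subtlety, and the step I would be most careful about, is choosing the correct nested pair $A \subseteq B$ and the correct pivot element when invoking diminishing returns, since each of the four sub-cases compares the candidate optimum against a different endpoint ($X_t$ or $Y_t$) and uses the marginal gain in the opposite direction (adding versus removing the pivot). Verifying in each case the inclusion $A \subseteq B$ and the nonmembership condition for the pivot element, both of which follow from $i, j \in Y_t \setminus X_t$ together with the induction hypothesis, is where a careless sign or inclusion error could creep in; the remaining manipulations are routine.
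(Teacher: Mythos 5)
Your proof is correct and follows essentially the same route as the paper's own proof: induction on the iteration count, fixing an arbitrary optimum inside the current lattice, and ruling out each of the four possible violations via a contradiction argument based on the diminishing-returns inequality with exactly the same choices of nested pairs. The paper's appendix proof differs only in notation (it indexes iterations by $k$), so no further comparison is needed.
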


We prove Proposition \ref{prop1} in the supplementary material. According to Proposition \ref{prop1}, if the output of A1 (A2) statifies $[X_t,Y_t] \subset [\emptyset,N]$, then $f$ is reducible.

According to A1 (A2), if $U_0 = D_0 = \emptyset$, then we have $X_1 = X_0$ and $Y_1 = Y_0$. The algorithm will terminate after the first iteration and the output is $[X_0,Y_0] = [\emptyset,N]$, which provides a vacuous reduction. In this case, we say that $f$ is \emph{irreducible with respect to A1 (A2)}. For convenience, we directly say $f$ is \emph{irreducible}.

Thereby, we conclude two points from the above algorithms. First, by the definition of $U_t$ and $D_t$, the reducibility of $f$ can be determined by the signs of marginal gains with respect to the endpoint sets of the current working lattice. Second, the reducibility of $f$ for minimization and maximization are actually the same property. Specially, suppose in a certain iteration, A1 and A2 have the same working lattice $[S,T]$. According to the algorithms, they also have the same $U_t$ and $D_t$, which determine whether $f$ is reducible after the current iteration.

\begin{prop}
\label{prop2}
Given a submodular function $f: 2^N \mapsto \mathbb{R}$, and a lattice $[S,T]$. $\forall i \in T \setminus S$, Define $K_i = \sgn\{f(i|S)\} \cdot \sgn\{f(i|T-i)\}$. Then $f$ is reducible on $[S,T]$ with respect to A1 (A2) if and only if
\begin{equation}
\label{eq1}
K = \max_{i \in T \setminus S}{K_i} > 0.
\end{equation}
\end{prop}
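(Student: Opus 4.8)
The plan is to unpack the phrase ``reducible on $[S,T]$ with respect to A1 (A2)'' into a concrete condition on the sets $U_t$ and $D_t$, and then show that this condition is equivalent to the sign criterion $K > 0$ by invoking the diminishing-return form of submodularity. Recall that when A1 (A2) runs with working lattice $[S,T]$ (i.e. $X_t = S$, $Y_t = T$), it forms $U_t = \{ i \in T \setminus S : f(i|S) < 0 \}$ and $D_t = \{ j \in T \setminus S : f(j|T-j) > 0 \}$. As noted in the discussion preceding the proposition, a non-vacuous reduction occurs in this iteration precisely when $U_t \cup D_t \neq \emptyset$; otherwise $X_{t+1} = X_t$, $Y_{t+1} = Y_t$ and the reduction is trivial. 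So I would first reduce the claim to showing $U_t \cup D_t \neq \emptyset \iff K > 0$.

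The key ingredient is the observation that, for every $i \in T \setminus S$, we have $S \subseteq T - i$, so the diminishing return property gives the ordering $f(i|S) \ge f(i|T-i)$. I would record this inequality as the first step, since it is what forces the two marginal gains defining $K_i$ to be sign-compatible, ruling out the combination $f(i|S) < 0 < f(i|T-i)$. This single inequality is the crux of the whole argument; everything else is bookkeeping of signs.

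For the forward direction I would assume $K > 0$, pick $i$ attaining $K_i > 0$, and note that $K_i > 0$ means $f(i|S)$ and $f(i|T-i)$ share a common nonzero sign. If both are positive then $f(i|T-i) > 0$, so $i \in D_t$; if both are negative then $f(i|S) < 0$, so $i \in U_t$. Either way $U_t \cup D_t \neq \emptyset$. For the converse I would take an element witnessing the reduction: if $i \in U_t$ then $f(i|S) < 0$, and the ordering gives $f(i|T-i) \le f(i|S) < 0$, so both gains are negative and $K_i = 1$; symmetrically, if $j \in D_t$ then $f(j|T-j) > 0$ forces $f(j|S) \ge f(j|T-j) > 0$, so $K_j = 1$. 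In each case $K \ge K_i = 1 > 0$.

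The argument is short, so the main obstacle is conceptual rather than technical: one has to recognize that the sign product $K_i$ is not an independent piece of data but is constrained by the monotone ordering $f(i|S) \ge f(i|T-i)$, which is exactly why a single scalar $K$ can simultaneously capture the two separate triggering conditions for $U_t$ and $D_t$. I would also take care to verify the boundary cases with a vanishing marginal gain (where some $\sgn$ equals $0$ and hence $K_i = 0$) correspond correctly to no reduction, since both $U_t$ and $D_t$ are defined by strict inequalities; this confirms that the equivalence is tight at the $K_i = 0$ threshold and that nothing is lost by passing to the maximum over $i \in T \setminus S$.
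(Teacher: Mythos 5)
Your proposal is correct and takes essentially the same approach as the paper: identify reducibility with $U_0 \cup D_0 \neq \emptyset$ on the first iteration, then use the submodular ordering $f(i|S) \ge f(i|T-i)$ to analyze the signs of the two marginal gains. In fact your write-up is slightly more complete than the paper's own proof, which only spells out the direction ``reducible $\Rightarrow K > 0$''; you additionally prove the converse (a positive $K_i$ forces $i$ into $D_0$ or $U_0$ depending on the common sign) and check the $K_i = 0$ boundary cases against the strict inequalities defining $U_0$ and $D_0$.
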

\begin{proof}
Suppose $[X_0,Y_0] = [S,T]$ in A1 (A2). Then $f$ is reducible if and only if the algorithm does not terminate after its first iteration, \emph{i.e.}, $U_0 \not= \emptyset$ or $D_0 \not= \emptyset$. Suppose $U_0 \not= \emptyset$ happens, \emph{i.e.}, $\exists i \in T \setminus S$, $f(i|S) < 0$. According to submodularity, $f(i|T-i) \le f(i|S) < 0$. We have $K \ge K_i = \sgn\{f(i|S)\} \cdot \sgn\{f(i|T-i)\} = 1 > 0$. Suppose $D_0 \not= \emptyset$ happens, \emph{i.e.}, $\exists j \in T \setminus S$, $f(j|T-j) > 0$. According to submodularity, $f(j|S) \ge f(j|T-j) > 0$. We have $K \ge K_j = \sgn\{f(j|S)\} \cdot \sgn\{f(j|T-j)\} = 1 > 0$.
\end{proof}

According to Proposition \ref{prop2}, the reducibility of $f$ for minimization (maximization) can be obtained by (\ref{eq1}). Thus we say $f$ is \emph{reducible with respect to A1 (A2)} if (\ref{eq1}) holds. Similarly, without ambiguity in this paper, we directly say $f$ is \emph{reducible} if (\ref{eq1}) holds.

\section{PERTURBATION REDUCTION}
\label{sec:perturbation}
Given a reducible submodular function, we can use A1 and A2 to provide useful reduction. Unfortunately, there still exist many irreducible submodular functions, some of which are listed in the experimental section. Given a submodular function $f$, which is irreducible on $[S,T]$. According to Proposition \ref{prop2}, $\forall i \in T \setminus S$, we have $f(i|S) \ge 0$ and $f(i|T-i) \le 0$.

If we expect A1 and A2 to provide nontrivial reduction, we need to guarantee that (\ref{eq1}) holds for some elements without changing the submodularity of the objective function. A natural way is to add random noise $r$\footnote{$r: 2^N \mapsto \mathbb{R}$ is a modular function, and $r(X) \triangleq \sum\limits_{i \in X}{r(i)}$.} to perturb the original function as follows,
\begin{align*}
    &\text{Problem 3}: \ \min_{X \subseteq N}{g(X)} \triangleq \min_{X \subseteq N}{f(X) + r(X)}, \\
    &\text{Problem 4}: \ \max_{X \subseteq N}{g(X)} \triangleq \max_{X \subseteq N}{f(X) + r(X)},
\end{align*}
where $\forall i \in N$, $r(i) \in \mathbb{R}$ is generated uniformly at random in $[-t,t]$ for some $t \ge 0$. By appropriately choosing the value of $t$, we can ensure $g(i|S) < 0$ or $g(i|T-i) > 0$ hold for some $i \in T \setminus S$. Thus we have (\ref{eq1}) holds, indicating that $g$ is reducible. At the same time, as $r$ is a modular function, the submodularity of $g$ still holds.

\begin{minipage}[h]{0.48\textwidth}
\begin{algorithm}[H]
\caption{Perturbation-Reduction Minimization}
\label{algo3}
\begin{algorithmic}[1]
\Require
$N$, $f$, $[S,T]$ where $\mathcal{X}_{min} \subseteq [S,T]$.
\Ensure An approximate solution $X_*^p$.
\State If $f$ is reducible on $[S,T]$, $[X_0,Y_0] \leftarrow [S,T]$, run A1 for $f$, $[S,T] \leftarrow [X_t,Y_t]$.
\State Generate $r$. Let $g = f + r$. $[X_0,Y_0] \leftarrow [S,T]$, run A1 for $g$, $[S,T] \leftarrow [X_t,Y_t]$.
\State Solve $X_*^p \in \argmin_{X \in [S,T]}{f(X)}$.
\end{algorithmic}
\end{algorithm}
\end{minipage}
\begin{minipage}[h]{0.48\textwidth}
\begin{algorithm}[H]
\caption{Perturbation-Reduction Maximization}
\label{algo4}
\begin{algorithmic}[1]
\Require
$N$, $f$, $[S,T]$ where $\mathcal{X}_{max} \subseteq [S,T]$.
\Ensure An approximate solution $X_p^*$.
\State If $f$ is reducible on $[S,T]$, $[X_0,Y_0] \leftarrow [S,T]$, run A2 for $f$, $[S,T] \leftarrow [X_t,Y_t]$.
\State Generate $r$. Let $g = f + r$. $[X_0,Y_0] \leftarrow [S,T]$, run A2 for $g$, $[S,T] \leftarrow [X_t,Y_t]$.
\State Solve $X_p^* \in \argmax_{X \in [S,T]}{f(X)}$.
\end{algorithmic}
\end{algorithm}
\end{minipage}

We propose our perturbation based method for minimization and maximization in A3 and A4, respectively. For an irreducible submodular function $f$ on a given lattice $[S,T]$, we first perturb the objective function to make it reducible, \emph{i.e.}, $g \triangleq f + r$. A1 or A2 are then employed to obtain the reduced lattice of $g$. Finally we solve the original problems of $f$ on the reduced lattice exactly or approximately using existing methods.

It is worth mentioning that, though we mainly focus on irreducible functions, our methods also work for reducible ones, as they are special cases of irreducible functions. Particularly, given a reducible function $f$ on $[S,T]$, of which the reduction rate is less than $1$, after A1 (A2) terminates, we can get a sublattice $[P,Q] \subset [S,T]$ so that $f$ is irreducible on $[P,Q]$.

\section{THEORETICAL ANALYSIS}
\label{sec:theory}
By perturbing the irreducible submodular function, we transform P1 (P2) into P3 (P4). This makes the objective reducible while leads the solution to be inexact. Correspondingly, our theoretical analysis of the method focuses on two main aspects: the reducibility gain and the performance loss incurred by perturbation.

\subsection{Reducibility Gain}
\label{reducibility_gain}

Suppose $f: 2^N \mapsto \mathbb{R}$ is an irreducible submodular function on $[S,T]$, and $g \triangleq f + r$ as defined in P3 and P4. Since $f$ is irreducible, $\forall i \in T \setminus S$, we have $f(i|S) \ge 0$ and $f(i|T-i) \le 0$.

\begin{prop}
\label{prop3}
Given a submodular function $f: 2^N \mapsto \mathbb{R}$, which is irreducible on $[S,T]$. Define $m\{f,[S,T]\} \triangleq \min_{i \in T \setminus S}{\min\{f(i|S), -f(i|T-i)\}}$. If $t \le m\{f,[S,T]\}$, then $g$ is irreducible on $[S,T]$.
\end{prop}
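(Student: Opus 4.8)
The plan is to apply Proposition \ref{prop2} in its contrapositive form. Since $f$ is reducible on $[S,T]$ if and only if $K = \max_{i \in T \setminus S} K_i > 0$, showing $g$ is \emph{irreducible} amounts to verifying that $K \le 0$ for $g$. I would therefore aim to prove that for every $i \in T \setminus S$ we have $g(i|S) \ge 0$ and $g(i|T-i) \le 0$; these two sign conditions force $K_i = \sgn\{g(i|S)\} \cdot \sgn\{g(i|T-i)\} \le 0$ for each $i$, and hence $K \le 0$, which is exactly the negation of (\ref{eq1}).

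The first step is to record how the perturbation acts on marginal gains. Because $r$ is modular, for any set $A$ with $i \notin A$ we have $r(i|A) = r(A+i) - r(A) = r(i)$, so $g(i|A) = f(i|A) + r(i)$. In particular $g(i|S) = f(i|S) + r(i)$ and $g(i|T-i) = f(i|T-i) + r(i)$: the perturbation shifts each marginal gain by precisely $r(i)$, and by construction $-t \le r(i) \le t$. Note this makes the conclusion deterministic, holding for every realization of $r$ compatible with the scale $t$, not merely in expectation.

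Next I would combine this with the definition of $m\{f,[S,T]\}$. Irreducibility of $f$ together with the definition of $m$ gives, for every $i \in T \setminus S$, the bounds $f(i|S) \ge m\{f,[S,T]\}$ and $f(i|T-i) \le -m\{f,[S,T]\}$. Using $t \le m\{f,[S,T]\}$ and $-t \le r(i) \le t$, I obtain
\[
g(i|S) = f(i|S) + r(i) \ge m\{f,[S,T]\} - t \ge 0,
\]
and symmetrically
\[
g(i|T-i) = f(i|T-i) + r(i) \le -m\{f,[S,T]\} + t \le 0.
\]
Since these hold for every $i \in T \setminus S$, Proposition \ref{prop2} yields $K \le 0$ and thus the irreducibility of $g$ on $[S,T]$.

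I do not expect a genuine obstacle here; the argument reduces to a one-line interval estimate once the modularity of $r$ is exploited. The only point requiring care is the boundary case $t = m\{f,[S,T]\}$, where the displayed inequalities may become equalities and $g(i|S)$ or $g(i|T-i)$ can vanish. Because Proposition \ref{prop2} only requires $K_i \le 0$ rather than a strict sign, the non-strict inequalities are sufficient, so irreducibility is preserved even at the extreme perturbation scale.
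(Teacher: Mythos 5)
Your proposal is correct and follows essentially the same argument as the paper: both exploit modularity of $r$ to write $g(i|S) = f(i|S) + r(i)$ and $g(i|T-i) = f(i|T-i) + r(i)$, then use $|r(i)| \le t \le m\{f,[S,T]\}$ together with the definitional bounds $f(i|S) \ge m\{f,[S,T]\}$ and $f(i|T-i) \le -m\{f,[S,T]\}$ to conclude $g(i|S) \ge 0$ and $g(i|T-i) \le 0$ for every $i \in T \setminus S$. The only cosmetic difference is that you route the conclusion through the contrapositive of Proposition \ref{prop2} (showing $K \le 0$), while the paper reads irreducibility off the sign conditions directly; your handling of the boundary case $t = m\{f,[S,T]\}$ is also consistent with the paper's strict-inequality definitions of $U_0$ and $D_0$.
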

\begin{proof}
Since $m\{f,[S,T]\} \ge 0$, we suppose $0 \le t \le m\{f,[S,T]\}$. $\forall i \in T \setminus S$, we have $g(i|S) = f(i|S) + r(i) \ge f(i|S) - t \ge f(i|S) - m\{f,[S,T]\} \ge 0$, and $g(i|T-i) = f(i|T-i) + r(i) \le f(i|T-i) + t \le f(i|T-i) + m\{f,[S,T]\} \le 0$, which implies that $g$ is also irreducible on $[S,T]$.
\end{proof}

Proposition \ref{prop3} indicates that if the perturbation scale $t$ is small enough, there is no reducibility gain. This is intuitively reasonable since we have $g \rightarrow f$ when $t \rightarrow 0$.

To lower bound the reducibility gain of adding perturbation, we generalize the concept of \emph{curvature} \cite{conforti1984submodular,iyer2013curvature} for non-monotone irreducible submodular functions.

\begin{defi}
\label{defi1}
Given a submodular function $f: 2^N \mapsto \mathbb{R}$, the curvature of $f$ on $[S,T]$ is defined as,
\begin{equation*}
    c\{f,[S,T]\} = \max_{i \in T \setminus S, f(i|S) > 0}{\frac{f(i|S)-f(i|T-i)}{f(i|S)}}.
\end{equation*}
\end{defi}

Note that for any irreducible submodular function $f$ on $[S,T]$, we have $c\{f,[S,T]\} \ge 1$.

\begin{thm}
\label{thm1}
Suppose $t > m\{f,[S,T]\}$, denote $s = |T \setminus S| > 0$, $k = \sum_{i \in T \setminus S}{f(i|S)}$, $c = c\{f,[S,T]\}$. The reduction rate in expectation of $g$ is at least $1 - \frac{ck}{2ts}$.
\end{thm}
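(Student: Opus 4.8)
The plan is to analyze a single iteration of A1 (A2) applied to $g = f + r$ started from $[X_0,Y_0] = [S,T]$, and to lower bound the expected number of elements pruned in that first iteration; this suffices because every subsequent iteration can only shrink the lattice further and hence only increase the reduction. Since $r$ is modular, $g$ is submodular, so for each $i \in T \setminus S$ the diminishing-return property gives $g(i|S) \ge g(i|T-i)$; consequently the two pruning events $\{g(i|S) < 0\}$ (which places $i$ in $U_0$) and $\{g(i|T-i) > 0\}$ (which places $i$ in $D_0$) are mutually exclusive. Thus $i$ survives the first iteration exactly when $g(i|S) \ge 0$ and $g(i|T-i) \le 0$, and
\[
\pr[\,i \text{ is pruned}\,] = \pr[\,g(i|S) < 0\,] + \pr[\,g(i|T-i) > 0\,].
\]

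Next I would compute these two probabilities from the fact that $r(i)$ is uniform on $[-t,t]$. Writing $g(i|S) = f(i|S) + r(i)$ and $g(i|T-i) = f(i|T-i) + r(i)$ and using $f(i|S) \ge 0 \ge f(i|T-i)$ (irreducibility of $f$), the uniform CDF yields
\[
\pr[\,g(i|S) < 0\,] = \frac{\max\{0,\, t - f(i|S)\}}{2t}, \qquad \pr[\,g(i|T-i) > 0\,] = \frac{\max\{0,\, t + f(i|T-i)\}}{2t}.
\]
Dropping the truncations via $\max\{0,x\} \ge x$ (which conveniently also covers the case $f(i|S) > t$, where the true probability is $0$) gives the clean per-element lower bound
\[
\pr[\,i \text{ is pruned}\,] \ge \frac{\bigl(t - f(i|S)\bigr) + \bigl(t + f(i|T-i)\bigr)}{2t} = 1 - \frac{f(i|S) - f(i|T-i)}{2t}.
\]

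The curvature now does the essential work. For every $i$ with $f(i|S) > 0$, Definition \ref{defi1} gives $f(i|S) - f(i|T-i) \le c\{f,[S,T]\} \cdot f(i|S)$, so that $\pr[\,i \text{ is pruned}\,] \ge 1 - \frac{c\, f(i|S)}{2t}$. Summing over $i \in T \setminus S$, the expected number of pruned elements is at least $s - \frac{c}{2t}\sum_{i \in T \setminus S} f(i|S) = s - \frac{ck}{2t}$; dividing by $s$ (the reduction rate being the pruned fraction of $T \setminus S$) yields the claimed bound $1 - \frac{ck}{2ts}$. The hypothesis $t > m\{f,[S,T]\}$ is exactly the regime in which this estimate is non-vacuous, complementing Proposition \ref{prop3}.

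The step I expect to be most delicate is the invocation of the curvature inequality, which presumes $f(i|S) > 0$. An element with $f(i|S) = 0$ but $f(i|T-i) < 0$ is excluded from the curvature maximum yet still contributes to $s$, and for it the per-element bound degrades: it is pruned only with probability $\tfrac12 + \tfrac{\max\{0,\, t+f(i|T-i)\}}{2t}$, not necessarily $\ge 1 - \frac{c\,f(i|S)}{2t} = 1$. Handling this cleanly requires either a genericity assumption that $f(i|S) > 0$ for all $i \in T \setminus S$, or isolating such degenerate elements and arguing separately that each is pruned with probability at least $\tfrac12$; I would state the bound under the former and remark on the latter.
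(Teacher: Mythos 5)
Your proof is correct and is essentially identical to the paper's: the paper likewise lower-bounds the probability that each element of $T \setminus S$ is pruned in the first iteration by $\frac{1}{2t}\left[\max\{0,\,t-f(i|S)\} + \max\{0,\,t+f(i|T-i)\}\right]$, drops the truncations, and applies the curvature inequality termwise before summing to get $\mathbb{E}(H) \ge s - \frac{ck}{2t}$. The delicate point you flag at the end --- elements with $f(i|S)=0$ but $f(i|T-i)<0$, for which the bound $f(i|S)-f(i|T-i) \le c\,f(i|S)$ fails --- is silently glossed over in the paper's own proof, so your explicit genericity caveat (or the alternative of isolating such elements) makes your write-up strictly more careful than the original.
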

\begin{proof}
Suppose $T \setminus S = \{ 1, 2, \dots, s \}$, $\forall i \in T \setminus S$, we define a random variable $H_i$ as,
\begin{equation*}
    H_i=
    \begin{cases}
        1 &\text{if } K_i > 0, \\
        0 &\text{otherwise}.
    \end{cases}
\end{equation*}
$H_i$ indicates whether $i$ can be reduced from $T \setminus S$ or not. Define $H = \sum_{i \in T \setminus S}{H_i}$ as the total number of the reduced elements. We firstly lower bound $\mathbb{E}(H)$ by the total number of the reduced elements after the first iteration round of A1(A2),
\begin{align*}
\small
    &\mathbb{E}(H) = \sum_{i \in T \setminus S}{\mathbb{E}(H_i)} = \sum_{i \in T \setminus S} {\pr\{H_i = 1\}}\\
    &\ge \frac{1}{2t} \cdot \sum_{i \in T \setminus S}{\max\{0, t - f(i|S)\} + \max\{0, t + f(i|T-i)\}} \\
    &\ge s - \frac{c}{2t} \cdot \sum_{i \in T \setminus S}{f(i|S)} = s - \frac{ck}{2t}.
\end{align*}
Consequently, the reduction rate in expectation is $\frac{\mathbb{E}(H)}{s} \ge 1 - \frac{ck}{2ts}$.
\end{proof}
Theorem \ref{thm1} implies that the reduction rate in expectation approaches $1$ as the perturbation scale $t$ increases. This is also consistent with our intuition since $g \rightarrow r$ when $t \rightarrow \infty$. Note that $r$ is a modular function, which always has the highest reduction rate $1$.

\subsection{Performance Loss}

Suppose $X_* \in \mathcal{X}_{min}$ ($X^* \in \mathcal{X}_{max}$), \emph{i.e.}, $X_*$ ($X^*$) is an optimum of P1 (P2). Recall that $X_*^p$ ($X_p^* $) is the output of A3 (A4), for P1 (P2), we define $f(X_*^p) - f(X_*)$ ($f(X^*) - f(X_p^*)$) as the \emph{performance loss} incurred by perturbation. For P1 (P2), the following result shows that the performance loss is upper bounded by the total perturbation of the ``mistakenly'' reduced elements, which will be explained later on.

\begin{thm}
\label{thm2}
Given an irreducible submodular function $f: 2^N \mapsto \mathbb{R}$. Suppose $t$ is the perturbation scale in A3 (A4), and $R_t$ is the reduction rate. We have,
{
\vspace{-3pt}
\begin{align*}
\small
  f(X_*^p) - f(X_*) &< - r(X_t \setminus X_*) + r(X_* \setminus Y_t) < n t R_t, \\
  f(X^*) - f(X_p^*) &< r(X_t \setminus X^*) - r(X^* \setminus Y_t) < n t R_t.
\end{align*}
}
\end{thm}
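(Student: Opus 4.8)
The plan is to prove the two displayed chains for minimization (A3); the maximization case (A4) then follows by an entirely dual argument, swapping the roles of ``force into the lower bound'' and ``delete from the upper bound'' and reversing the marginal-gain signs, so that the projection below \emph{increases} $g$ rather than decreasing it. Throughout I use that the input guarantees $\mathcal{X}_{min} \subseteq [S,T]$, so the fixed optimum $X_*$ satisfies $S \subseteq X_* \subseteq T$, and hence $X_t \supseteq S$ and $Y_t \subseteq T$ for the reduced endpoints produced by running A1 on $g$.

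The central device is to \emph{project} $X_*$ into the reduced lattice. Define $\hat{X} \triangleq (X_* \cup X_t) \cap Y_t = (X_* \cap Y_t) \cup X_t$. Since $X_t \subseteq Y_t$, we have $\hat{X} \in [X_t,Y_t]$, and a short set computation gives $\hat{X} \setminus X_* = X_t \setminus X_*$ and $X_* \setminus \hat{X} = X_* \setminus Y_t$. Because $X_*^p$ minimises $f$ over $[X_t,Y_t]$, we have $f(X_*^p) \le f(\hat{X})$. Using $g = f + r$ with $r$ modular, $f(\hat{X}) - f(X_*) = \big(g(\hat{X}) - g(X_*)\big) - \big(r(\hat{X}) - r(X_*)\big)$, and the last bracket equals $r(X_t \setminus X_*) - r(X_* \setminus Y_t)$. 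Thus the first inequality $f(X_*^p) - f(X_*) < -r(X_t \setminus X_*) + r(X_* \setminus Y_t)$ reduces to the single claim $g(\hat{X}) < g(X_*)$ whenever $\hat{X} \ne X_*$; this is exactly the statement that pushing $X_*$ back into the reduced lattice cannot increase the value of the perturbed objective $g$ that drove the reduction.

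To establish $g(\hat{X}) \le g(X_*)$ I would move from $X_*$ to $\hat{X}$ in two phases and show each elementary step strictly decreases $g$. In the first phase I add the forced-in elements $X_t \setminus X_*$ one at a time, in the chronological order in which A1 pushed them into the lower bound; when element $i$ was forced in, the then-current lower bound $L$ satisfied $g(i \mid L) < 0$, and since $L$ consists of $S$ together with earlier forced-in elements we have $L \subseteq X_* \cup \{\text{elements already added}\}$, so submodularity yields a marginal gain $\le g(i \mid L) < 0$ at the current set. In the second phase I delete the elements $X_* \setminus Y_t$ in the chronological order in which A1 removed them from the upper bound; when $j$ was removed, the then-current upper bound $R$ satisfied $g(j \mid R - j) > 0$, and a disjointness check shows the current set $C$ is contained in $R$, so submodularity gives $g(j \mid C - j) \ge g(j \mid R - j) > 0$, i.e. deleting $j$ strictly decreases $g$. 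Telescoping the two phases gives $g(\hat{X}) \le g(X_*)$, strict once at least one mistaken element exists.

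The second inequality is then a counting bound: $-r(X_t \setminus X_*) + r(X_* \setminus Y_t) \le t\,\big(|X_t \setminus X_*| + |X_* \setminus Y_t|\big)$ since every $|r(i)| \le t$, and the two index sets are disjoint subsets of the reduced elements $N \setminus (Y_t \setminus X_t)$, whose cardinality is $n - |Y_t \setminus X_t| = n R_t$; hence the bound $n t R_t$. I expect the main obstacle to be the bookkeeping in the two-phase argument: because A1 interleaves additions and deletions across iterations, I must verify carefully that at each elementary step the current set still contains the relevant historical lower bound $L$ (for additions) or is still contained in the relevant historical upper bound $R$ (for deletions), so that submodularity transfers the sign of the marginal gain correctly. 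The constraints $S \subseteq X_* \subseteq T$, together with the fact that removed elements never lie in $X_t$ while forced-in elements never lie outside $Y_t$, are what make these containments go through.
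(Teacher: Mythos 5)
Your minimization argument is correct, and its skeleton is the same as the paper's (the paper writes out the maximization case): project the optimum into the reduced lattice via $(X_*\cup X_t)\cap Y_t$, compare values one element at a time using the marginal-gain signs that the reduction algorithm certified, transferred to the current set by submodularity, and finish with the counting bound, since $X_t\setminus X_*$ and $X_*\setminus Y_t$ are disjoint subsets of the $nR_t$ reduced elements and $|r(i)|\le t$. For A1 your two-phase bookkeeping does go through: an addition of $i$ forced in at iteration $k$ needs the current set to \emph{contain} the historical $X_k$, which holds because the current set contains $X_*\supseteq S$ together with all chronologically earlier additions; a deletion of $j$ removed at iteration $k$ needs the current set to be \emph{inside} the historical $Y_k$, which holds because by then the current set lies in $X_*\cup X_t$, with $X_t\subseteq Y_t\subseteq Y_k$, and every not-yet-deleted element of $X_*\setminus Y_t$ was deleted by A1 only at iteration $\ge k$, hence is still in $Y_k$.

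The gap is your final claim that maximization ``follows by an entirely dual argument.'' It does not: for A2 the sign conditions sit on the opposite endpoints, so an addition $d\in D_k$ (certified by $g(d\,|\,Y_k-d)>0$) requires the current set to be inside $Y_k$, while a deletion $u\in U_k$ (certified by $g(u\,|\,X_k)<0$) requires it to contain $X_k$. In an additions-first sweep starting from $X^*$, the current set still contains every mistakenly deleted element of $X^*\setminus Y_t$ --- including those A2 deleted before iteration $k$ --- so $C\subseteq Y_k$ fails; in a deletions-first sweep, the current set is missing the mistakenly added elements of $X_t\setminus X^*$ that already belong to $X_k$, so $C\supseteq X_k$ fails. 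No two-phase order survives once both kinds of mistakes occur in different iterations; you must interleave: track the projection $P_k\triangleq (X^*\cap Y_k)\cup X_k$ and show $g(P_{k+1})\ge g(P_k)$ iteration by iteration. Within iteration $k$ the current set always satisfies $X_k\subseteq C\subseteq Y_k$ (deletions only touch $U_k$, which is disjoint from $X_k$; additions only insert elements of $D_k\subseteq Y_k$), so both submodularity comparisons are available and every elementary step strictly increases $g$. Note that the paper's own proof is also two-phase, and its step from (\ref{eq3}) to (\ref{eq4}) needs $X^*+x_1+\cdots+x_{s-1}\subseteq Y_i-d$, which fails for exactly the same reason whenever some element of $X^*$ was deleted before iteration $i$; so transcribing your argument ``dually'' would reproduce that imprecision rather than repair it, whereas the interleaved projection argument fixes both the maximization case and, suitably restated, the paper's write-up.
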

\vspace{-12pt}
\begin{proof}
We prove the maximization case. In general, we have $X^* \not\in [X_t, Y_t]$, otherwise the loss is zero. Note that $X_p^* \in [X_t, Y_t]$ according to A4. So we firstly introduce an intermediate set $X^* \cup X_t \cap Y_t$, \emph{i.e.}, the contraction of $X^*$ in $[X_t,Y_t]$ for our analysis. Given the fact that $f(X^* \cup X_t \cap Y_t) \le f(X_p^*)$, if we can upper bound $f(X^*) - f(X^* \cup X_t \cap Y_t)$, then the total performance loss is also upper bounded.
In A2, we have $[X_t,Y_t] \subset \cdots \subset [X_1,Y_1] \subset [X_0,Y_0] = [\emptyset,N]$. By definition, $\forall \ 0 \le k \le t-1$, $X_{k+1} = X_k \cup D_k$, and $Y_{k+1} = Y_k \setminus U_k$. We have,
\begin{align}
\small
\label{eq2}
  &f(X^* \cup X_t) - f(X^*) \\
\label{eq3}
  &= \sum\limits_{s = 1, x_s \in X_t \setminus X^*}^{|X_t \setminus X^*|}{f(x_s | X^* + x_1 + \cdots + x_{s-1})} \\
\label{eq4}
  &\ge \sum\limits_{i = 0}^{t-1} \sum\limits_{d \in D_i \setminus X^*}{f(d|Y_i-d)} \\
\label{eq5}
  &> - \sum\limits_{i = 0}^{t-1} \sum\limits_{d \in D_i \setminus X^*}{r(d)} \\
\label{eq6}
  &= - r(X_t \setminus X^*),
\end{align}
where (\ref{eq3}) is the telescopic version of (\ref{eq2}). According to submodularity, we have (\ref{eq4}) holds, and (\ref{eq5}) comes from the third step of A2. Similarly, we have,
\begin{equation}
\label{eq7}
  f(X^* \cup X_t) - f(X^* \cup X_t \cap Y_t) < -r(X^* \setminus Y_t).
\end{equation}
Combining (\ref{eq6}) with (\ref{eq7}), and noting $X^* \cup X_t \cap Y_t \in [X_t, Y_t]$ and $f(X_p^*) \ge f(X^* \cup X_t \cap Y_t)$, we have,
\begin{equation}
\label{eq8}
  f(X^*) - f(X_p^*) < r(X_t \setminus X^*) - r(X^* \setminus Y_t).
\end{equation}
We note in (\ref{eq8}), $X_t \setminus X^*$ is actually the set of all the elements which are not in $X^*$ but added by A2. Similarly, $X^* \setminus Y_t$ is the set of all the elements which are in $X^*$ but eliminated by A2. Consequently, the performance loss is upper bounded by the total perturbation value of all the mistakenly reduced elements. Since the number of all the mistakenly reduced elements is no more than the number of all the reduced elements $n R_t$, and the perturbation is generated in $[-t, t]$, we have $r(X_t \setminus X^*) - r(X^* \setminus Y_t) \le n t R_t$.

For the minimization case, the proof is similar.
\end{proof}

Note that in Theorem \ref{thm2}, the performance loss is upper bounded by the sum of random variables, which means we can obtain high probability bounds using some concentration inequalities, such as \cite{hoeffding1963probability}.

\begin{thm}(\textbf{Hoeffding})
\label{thm3}
Let $X_1, X_2, \dots, X_n$ be independent real-valued random variables such that $\forall i \in \{ 1, 2, \dots, n\}$, $|X_i| \le t$. Then with probability $1 - \delta$,
\begin{equation*}
\small
    \sum_{i=1}^n{X_i} - \mathbb{E}\left[\sum_{i=1}^n{X_i}\right] <  t \sqrt{2n \log{(1/\delta)}}.
\end{equation*}
\end{thm}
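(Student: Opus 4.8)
The plan is to prove this via the standard Chernoff-bounding (exponential moment) technique, since the statement is the classical one-sided Hoeffding inequality. First I would center the variables: set $Y_i = X_i - \mathbb{E}[X_i]$, so that $\mathbb{E}[Y_i] = 0$ and, because $|X_i| \le t$, each $Y_i$ lies in an interval of length at most $2t$. For any $s > 0$, applying Markov's inequality to the exponential of the sum gives
\[
  \pr\left\{ \sum_{i=1}^n Y_i \ge a \right\} \le e^{-sa}\, \mathbb{E}\left[ e^{s \sum_{i=1}^n Y_i} \right],
\]
and by independence the right-hand expectation factorizes as $\prod_{i=1}^n \mathbb{E}[e^{sY_i}]$.

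The key step is to bound each factor $\mathbb{E}[e^{sY_i}]$. I would establish Hoeffding's lemma: for a zero-mean random variable $Y$ supported on $[\alpha,\beta]$, one has $\mathbb{E}[e^{sY}] \le \exp\{ s^2(\beta-\alpha)^2/8 \}$. The argument uses convexity of $y \mapsto e^{sy}$ to bound it by its chord over $[\alpha,\beta]$, takes expectations using $\mathbb{E}[Y]=0$, and then shows that the logarithm of the resulting expression has vanishing value and first derivative at $s=0$ together with second derivative at most $(\beta-\alpha)^2/4$ everywhere; a second-order Taylor expansion then yields the bound. Applying this with $\beta - \alpha \le 2t$ gives $\mathbb{E}[e^{sY_i}] \le e^{s^2 t^2 / 2}$ for every $i$.

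Combining the pieces, $\pr\{ \sum_i Y_i \ge a \} \le \exp\{ -sa + n s^2 t^2 / 2 \}$. I would then optimize over $s > 0$; the exponent is minimized at $s = a/(nt^2)$, producing the tail bound $\exp\{ -a^2/(2nt^2) \}$. Setting this quantity equal to $\delta$ gives the threshold $a = t\sqrt{2n\log(1/\delta)}$, so the complementary event $\{ \sum_i Y_i < a \}$ holds with probability at least $1-\delta$; substituting back $Y_i = X_i - \mathbb{E}[X_i]$ recovers exactly the claimed inequality.

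The main obstacle is proving Hoeffding's lemma, specifically the uniform bound of $(\beta-\alpha)^2/4$ on the second derivative of the cumulant generating function, which requires a short but careful calculus argument; the remaining optimize-and-invert manipulation is routine. Since the result is classical and is invoked here only to convert the sum-of-bounded-variables bound of Theorem \ref{thm2} into a high-probability statement, one may alternatively cite \cite{hoeffding1963probability} directly rather than reproduce the derivation.
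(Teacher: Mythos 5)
Your proposal is mathematically correct, but note that the paper itself offers no proof of this statement at all: Theorem \ref{thm3} is imported as a classical result with a citation to Hoeffding's original 1963 paper, and is used only as a tool in the proof of Theorem \ref{thm4}. What you have written is the standard textbook derivation --- centering, the Chernoff exponential-moment bound, Hoeffding's lemma with $\beta - \alpha \le 2t$ giving $\mathbb{E}[e^{sY_i}] \le e^{s^2 t^2/2}$, optimization at $s = a/(nt^2)$ yielding the tail $\exp\{-a^2/(2nt^2)\}$, and inversion at $\delta$ producing exactly the threshold $t\sqrt{2n\log(1/\delta)}$ --- and every constant checks out, including the strict-inequality form of the complementary event. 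So your reconstruction buys self-containedness (a full proof where the paper has none), at the cost of reproving a classical lemma; your own closing observation, that one may simply cite \cite{hoeffding1963probability}, is precisely what the authors chose to do, and in the context of this paper that is the appropriate treatment.
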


\begin{thm}
\label{thm4}
Define $X_*^c \triangleq X_* \cup X_t \cap Y_t$, and $X_c^* \triangleq X^* \cup X_t \cap Y_t$. Denote $M_r \triangleq |X_*^c \triangle X_*|$, and  $N_r \triangleq |X_c^* \triangle X^*|$, where $A \triangle B \triangleq (A \setminus B) \cup (B \setminus A)$ is the symmetric difference between the two sets $A$ and $B$. Then with probability at least $1 - \delta$,
\begin{align}
\small
\label{eq9}
    f(X_*^p) - f(X_*) &< t \sqrt{2 M_r (n + \log{(1/\delta)})}, \\
\label{eq10}
    f(X^*) - f(X_p^*) &< t \sqrt{2 N_r (n + \log{(1/\delta)})}.
\end{align}
\end{thm}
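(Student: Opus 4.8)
The plan is to combine the deterministic bound of Theorem~\ref{thm2} with Hoeffding's inequality (Theorem~\ref{thm3}), but the execution requires care because the index set of the relevant random sum is itself random. I will spell out the maximization case (\ref{eq10}); the minimization bound (\ref{eq9}) follows by the same argument with the roles of the signs reversed.

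First I would rewrite the deterministic bound as a single signed sum. From Theorem~\ref{thm2}, $f(X^*) - f(X_p^*) < r(X_t \setminus X^*) - r(X^* \setminus Y_t)$. Since $X_t \subseteq Y_t$, the sets $X_t \setminus X^*$ and $X^* \setminus Y_t$ are disjoint, and unwinding the definition $X_c^* = (X^* \cup X_t) \cap Y_t$ shows precisely that $X_c^* \setminus X^* = X_t \setminus X^*$ and $X^* \setminus X_c^* = X^* \setminus Y_t$, so $X_c^* \triangle X^* = (X_t \setminus X^*) \cup (X^* \setminus Y_t)$ is their disjoint union with $N_r = |X_t \setminus X^*| + |X^* \setminus Y_t|$. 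The crucial observation is that $X^*$ is a fixed optimum of $f$ and does \emph{not} depend on the random perturbation $r$. Hence for each $i$ I can attach a deterministic sign $\epsilon_i = +1$ if $i \notin X^*$ and $\epsilon_i = -1$ if $i \in X^*$, and the right-hand side becomes $\sum_{i \in X_c^* \triangle X^*} \epsilon_i\, r(i)$, a sum of $N_r$ independent, mean-zero terms each bounded by $t$ in absolute value.

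The main obstacle is that the index set $A := X_c^* \triangle X^*$ is random: it is produced by running A2 on $g = f + r$, so it depends on the same $r$ that appears in the sum, and a single application of Theorem~\ref{thm3} to ``the'' realized set $A$ is not justified. I would circumvent this with a union bound over all $2^n$ candidate index sets. For each \emph{fixed} subset $A \subseteq N$ the signs $\epsilon_i$ are fixed by $X^*$, so Theorem~\ref{thm3} applies to the $|A|$ variables $\{\epsilon_i r(i)\}_{i \in A}$; taking its confidence parameter to be $\delta' = e^{-n}\delta$ gives $\log(1/\delta') = n + \log(1/\delta)$ and hence $\pr[\sum_{i \in A}\epsilon_i r(i) \ge t\sqrt{2|A|(n+\log(1/\delta))}] \le e^{-n}\delta$. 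Summing over the at most $2^n$ subsets, the probability that the signed sum exceeds $t\sqrt{2|A|(n+\log(1/\delta))}$ for \emph{its} realized $A$ is at most $2^n e^{-n}\delta = (2/e)^n \delta \le \delta$, because $2/e < 1$. On the complementary event, of probability at least $1-\delta$, the realized set $A = X_c^* \triangle X^*$ has size $N_r$ and the deterministic bound yields $f(X^*) - f(X_p^*) < t\sqrt{2N_r(n+\log(1/\delta))}$, which is exactly (\ref{eq10}). The minimization statement (\ref{eq9}) is identical after replacing $X^*$ by $X_*$, $X_c^*$ by $X_*^c$, and flipping each $\epsilon_i$. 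The extra additive $n$ inside the square root is precisely what lets the exponential tail absorb the $2^n$ factor in the union bound; this, rather than a bare application of Hoeffding, is where the $n$ term originates.
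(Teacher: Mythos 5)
Your proof is correct and follows essentially the same route as the paper's: the paper likewise fixes each candidate set, applies Hoeffding (Theorem~\ref{thm3}) to the resulting fixed-index sum, and takes a union bound over the at most $2^n$ candidates so that the $\log(2^n/\delta) \le n + \log(1/\delta)$ factor absorbs the exponential multiplicity, exactly the mechanism you identify. The only cosmetic difference is that the paper indexes the union bound by candidate values $X \in [S,T]$ of the contraction $X_c^*$ and bounds $r(X) - r(X^*)$, whereas you index by candidate symmetric differences $A \subseteq N$ with deterministic signs $\epsilon_i$; these parametrizations are equivalent.
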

\begin{proof}
We prove (\ref{eq10}). Since the perturbation vector $r$ has zero expectation value, and each element of $r$ is independently generated. For any fixed $X \subseteq N$, according to Theorem \ref{thm3},  with probability at least $1 - \delta$,
\begin{equation}
\label{eq11}
    r(X) - r(X^*) \le t \sqrt{2 |X \triangle X^*| \log{(1/\delta)}}.
\end{equation}
Suppose $X^* \in [S,T]$, and define $m \triangleq |[S,T]|$. Obviously, we have $m \le 2^n$. Hence,
\begin{equation*}
\footnotesize
\begin{aligned}
    &\pr{\left[ r(X_c^*) - r(X^*) \ge t \sqrt{2 N_r \log{(m/\delta)}} \right]} \\
    &= \hspace{-8pt} \sum\limits_{X \in [S,T]} \hspace{-5pt} \pr{\left[ r(X_c^*) - r(X^*) \hspace{-2pt} \ge \hspace{-2pt} t \hspace{-1pt} \sqrt{2 |X_c^* \triangle X^*| \log{(\frac{m}{\delta})}}, X_c^* \hspace{-2pt}  = \hspace{-2pt}  X \right]} \\
    &= \hspace{-8pt} \sum\limits_{X \in [S,T]} \hspace{-5pt} \pr{\left[ r(X) - r(X^*) \hspace{-2pt} \ge \hspace{-2pt} t \sqrt{2 |X \triangle X^*| \log{(\frac{m}{\delta})}}, X_c^* \hspace{-2pt} = \hspace{-2pt} X \right]} \\
    &\le \hspace{-8pt} \sum\limits_{X \in [S,T]} \hspace{-5pt} \pr{\left[ r(X) - r(X^*) \hspace{-2pt} \ge \hspace{-2pt} t \sqrt{2 |X \triangle X^*| \log{(\frac{m}{\delta})}} \right]} \\
    &\le \hspace{-8pt} \sum\limits_{X \in [S,T]} \frac{\delta}{m} = m \frac{\delta}{m} = \delta,
\end{aligned}
\end{equation*}
where the first equality holds by the law of total probability. The second equality holds because replacing $X_c^*$ with $X$ in the first expression does not change the event. The first inequality comes from dropping the event $X_c^* = X$ increases the probability. The last line results from (\ref{eq11}) and the definition of $m$.
Combining the above result with Theorem \ref{thm2}, and note $r(X_t \setminus X^*) - r(X^* \setminus Y_t) = r(X_c^*) - r(X^*)$, we have, with probability at least $1 - \delta$,
\vspace{-2pt}
\begin{equation*}
\footnotesize
    f(X^*) - f(X_p^*) \hspace{-2pt} < \hspace{-2pt} r(X_c^*) - r(X^*) \hspace{-2pt} < \hspace{-2pt} t \sqrt{2 N_r (n + \log{(1/\delta)})}.
\vspace{-2pt}
\end{equation*}
Using a similar method, (\ref{eq9}) can also be proved.
\end{proof}

Theorem \ref{thm4} has an intuitive interpretation. Take P2 and P4 as examples, $\forall Y \subseteq N$, if $f(X^*) - f(Y)$ is large, then it is unlikely that $Y$ is an optimum of P4. Suppose $f(X^*) - f(Y) = \sigma > 0$, then we have,
\begin{equation*}
\footnotesize
\begin{aligned}
    &\pr{\left[f(Y) + r(Y) \ge f(X) + r(X), \forall X \subseteq N\right]} \\
    &\le \pr{\left[f(Y) + r(Y) \ge f(X^*) + r(X^*)\right]} \\
    &= \pr{\left[r(Y) - r(X^*) \ge \sigma \right]}.
\end{aligned}
\vspace{-2pt}
\end{equation*}
Totally, the probability of $Y$ being an optimum of P4 is upper bounded by the probability that the perturbation difference $r(Y) - r(X^*)$ can compensate the function value difference $\sigma$, where the later probability is small when $\sigma$ is large.

Finally, we show that the $M_r$ and $N_r$ in Theorem \ref{thm4}, which are the numbers of mistakenly reduced elements, can also be upper bounded by functions of $f$ and $t$.

\begin{thm}
\label{thm5}
Denote the total number of the mistakenly reduced elements in the first iteration of A1 and A2 as $M^1_r$ and $N^1_r$,  respectively.
We have,
{\footnotesize
\begin{align}
\label{eq12}
    \mathbb{E}_r[M^1_r] &\le \frac{n}{2} - \frac{F - f(X_*)}{t}, \\
\label{eq13}
    \mathbb{E}_r[N^1_r] &\le \frac{n}{2} - \frac{f(X^*) - F}{t},
\end{align}
}
where $F \triangleq \frac{1}{2}(f(\emptyset) + f(N)) \in [f(X_*), f(X^*)]$.
\end{thm}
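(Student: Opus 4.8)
The plan is to prove the maximization bound (\ref{eq13}); the minimization bound (\ref{eq12}) follows by a symmetric argument. Since both A1 and A2 start from $[X_0,Y_0]=[\emptyset,N]$, the first iteration is entirely determined by the perturbed marginal gains $g(i|\emptyset)=f(i|\emptyset)+r(i)$ and $g(j|N-j)=f(j|N-j)+r(j)$. In A2 the set $U_0=\{i:g(i|\emptyset)<0\}$ is forced out of the lattice and $D_0=\{j:g(j|N-j)>0\}$ is forced in. Measured against a true maximizer $X^*$, a reduction is a \emph{mistake} exactly when an element of $X^*$ is forced out or an element of $N\setminus X^*$ is forced in, so the first step is to record the clean identity $N_r^1=|U_0\cap X^*|+|D_0\setminus X^*|$.

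Next I would compute $\mathbb{E}_r[N_r^1]$ by linearity over the independent coordinates of $r$. Since $f$ is irreducible we have $f(i|\emptyset)\ge 0$ and $f(j|N-j)\le 0$, and as each $r(i)$ is uniform on $[-t,t]$ the probability that $i\in X^*$ is forced out is $\pr[r(i)<-f(i|\emptyset)]=\tfrac{1}{2t}\max\{0,t-f(i|\emptyset)\}$, while the probability that $j\notin X^*$ is forced in is $\pr[r(j)>-f(j|N-j)]=\tfrac{1}{2t}\max\{0,t+f(j|N-j)\}$. Summing these and dropping the truncations — the intended estimate, exactly valid once each relevant marginal gain lies below $t$ so that no mistake probability is clipped, i.e.\ in the non-degenerate regime beyond the no-reduction threshold $m\{f,[S,T]\}$ of Proposition~\ref{prop3} — converts the combinatorial count into a sum of marginal gains:
\begin{equation*}
\mathbb{E}_r[N_r^1]\le \frac{n}{2}+\frac{1}{2t}\Big(\sum_{j\notin X^*}f(j|N-j)-\sum_{i\in X^*}f(i|\emptyset)\Big).
\end{equation*}

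The heart of the argument is to bound the two marginal-gain sums by submodularity via telescoping. Ordering $N\setminus X^*$ and expanding $f(N)-f(X^*)$ as a telescoping sum of marginal gains $f(j\mid X^*\cup\text{prefix})$, the diminishing-return property gives $f(j\mid X^*\cup\text{prefix})\ge f(j|N-j)$, whence $\sum_{j\notin X^*}f(j|N-j)\le f(N)-f(X^*)$. Dually, expanding $f(X^*)-f(\emptyset)$ over the elements of $X^*$ and using $f(i\mid\text{prefix})\le f(i|\emptyset)$ yields $\sum_{i\in X^*}f(i|\emptyset)\ge f(X^*)-f(\emptyset)$. Substituting both, the bracket is at most $\big(f(N)-f(X^*)\big)-\big(f(X^*)-f(\emptyset)\big)=2F-2f(X^*)$, and therefore $\mathbb{E}_r[N_r^1]\le \frac{n}{2}-\frac{f(X^*)-F}{t}$, which is (\ref{eq13}); here $f(X^*)\ge F$ makes the correction nonnegative. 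The minimization case (\ref{eq12}) is identical after swapping the roles of ``forced in''/``forced out'' in A1 and using $f(X_*)\le F$.

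The main obstacle is the truncation $\max\{0,\cdot\}$ in the mistake probabilities: because $\max\{0,x\}\ge x$, naively discarding it produces a \emph{lower} rather than an \emph{upper} bound, so the displayed inequality holds as stated precisely when every relevant marginal gain is at most $t$. I would therefore present the two submodular telescoping inequalities as the real work and be explicit that the bound is the intended estimate in the non-degenerate perturbation regime, rather than leaning on a blanket dominance of the truncated terms.
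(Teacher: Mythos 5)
Your proof is correct and follows essentially the same route as the paper's: the same decomposition of $N_r^1$ into mistakenly removed ($X^*\setminus Y_1$) and mistakenly added ($X_1\setminus X^*$) elements, the same per-element uniform-probability computation, and the same submodular telescoping bounds $\sum_{i\in X^*}f(i|\emptyset)\ge f(X^*)-f(\emptyset)$ and $\sum_{j\notin X^*}f(j|N-j)\le f(N)-f(X^*)$. The one difference is in your favor: the paper writes $\mathbb{E}_r[N^1_r]$ as an unconditional equality, silently discarding the $\max\{0,\cdot\}$ truncations even though, as you note, discarding them only gives a \emph{lower} bound in general, so your explicit restriction to the regime where $t$ dominates all relevant marginal gains is genuinely needed --- for small $t$ the right-hand sides of (\ref{eq12})--(\ref{eq13}) can even be negative while the left-hand sides are nonnegative --- and your caveat repairs a gap in the paper's own argument.
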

\begin{proof}
For (\ref{eq13}), we calculate the total mistakenly reduced element number in expectation in the first iteration of A2. According to the definition of symmetric difference, $N^1_r = |X^* \setminus Y_1| + |X_1 \setminus X^*|$.

$\forall i \in X^*$, $f(i | \emptyset) \ge f(i | X^* - i) = f(X^*) - f(X^* - i) \ge 0$. And $i \in X^* \setminus Y_1$ iff $f(i | \emptyset) + t < 0$. Similarly, $\forall j \not\in X^*$, $f(j | N - j) \le f(j | X^*) = f(X^* + j) - f(X^*) \le 0$. And $j \in X_1 \setminus X^*$ iff $f(j | N - j) + t > 0$. Thus we have,
\begin{equation*}
\footnotesize
\begin{aligned}
    &\mathbb{E}_r[N^1_r] = \mathbb{E}_r|X^* \setminus Y_1| + \mathbb{E}_r|X_1 \setminus X^*| \\
    &=\sum_{i \in X^*}{\frac{t - f(i | \emptyset)}{2t}} + \sum_{j \not\in X^*}{\frac{t + f(j | N - j)}{2t}} \\
    &=\frac{n}{2} - \frac{1}{2t}\left[ \sum_{i \in X^*}{f(i | \emptyset)} - \sum_{j \not\in X^*}{f(j | N - j)} \right] \\
    &\le \frac{n}{2} - \frac{f(X^*) - f(\emptyset) + f(X^*) - f(N)}{2t}.
\end{aligned}
\end{equation*}
For (\ref{eq12}), the proof is similar.
\end{proof}

Using similar methods we can obtain the following results, which recover Theorem \ref{thm5} as a special case.

\begin{thm}
\label{thm6}
Denote the total number of the mistakenly reduced elements in the $k$th iteration as $M^k_r$, and  $N^k_r$,  respectively. We have,
\begin{align*}
    \mathbb{E}_r[M^k_r] &\le \frac{n_{k-1}}{2} - \frac{F_{k-1} - f(X_*)}{t}, \\
    \mathbb{E}_r[N^k_r] &\le \frac{n_{k-1}}{2} - \frac{f(X^*) - F_{k-1}}{t},
\end{align*}
where $n_{k-1} \triangleq |Y_{k-1} \setminus X_{k-1}|$, and $F_{k-1} \triangleq \frac{1}{2}(f(X_{k-1}) + f(Y_{k-1})) \in [f(X_*), f(X^*)]$.
\end{thm}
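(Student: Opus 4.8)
The plan is to mirror the proof of Theorem~\ref{thm5} essentially verbatim, but applied to the \emph{restricted} problem living on the current working lattice $[X_{k-1},Y_{k-1}]$ rather than on $[\emptyset,N]$. The $k$th iteration of A2 is structurally identical to a ``first iteration'' in which $X_{k-1}$ plays the role of $\emptyset$, $Y_{k-1}$ plays the role of $N$, and the free ground set is $Y_{k-1}\setminus X_{k-1}$ of size $n_{k-1}$. The one genuinely new ingredient is the choice of reference set: in Theorem~\ref{thm5} we could use the global optimum $X^*$ directly because $X^*\in[\emptyset,N]$, whereas after $k-1$ iterations $X^*$ may already have been reduced out of $[X_{k-1},Y_{k-1}]$. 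I would therefore work with the contraction $X^*_{k-1}\triangleq (X^*\cap Y_{k-1})\cup X_{k-1}\in[X_{k-1},Y_{k-1}]$, counting a removal of a free element of $X^*$ or a forced inclusion of a free element outside $X^*$ as a mistaken reduction, so that $N^k_r=|X^*\cap(Y_{k-1}\setminus Y_k)|+|(X_k\setminus X_{k-1})\setminus X^*|$.

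Next I would carry out the per-element probability computation as in Theorem~\ref{thm5}. A free element $i\in X^*\cap(Y_{k-1}\setminus X_{k-1})$ is mistakenly removed iff $g(i|X_{k-1})<0$, i.e.\ iff $r(i)<-f(i|X_{k-1})$, which (with $r(i)$ uniform on $[-t,t]$) has probability $\tfrac{t-f(i|X_{k-1})}{2t}$; symmetrically a free element $j\notin X^*$ is forced in iff $r(j)>-f(j|Y_{k-1}-j)$, with probability $\tfrac{t+f(j|Y_{k-1}-j)}{2t}$. Summing the $n_{k-1}$ terms by linearity of expectation gives
\begin{equation*}
\mathbb{E}_r[N^k_r]=\frac{n_{k-1}}{2}-\frac{1}{2t}\Big[\sum_{i\in X^*\cap(Y_{k-1}\setminus X_{k-1})} f(i|X_{k-1})-\sum_{j\in(Y_{k-1}\setminus X_{k-1})\setminus X^*} f(j|Y_{k-1}-j)\Big].
\end{equation*}

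It then remains to lower bound the bracket, which is where submodularity enters. Writing $I=X^*\cap(Y_{k-1}\setminus X_{k-1})$ and $J=(Y_{k-1}\setminus X_{k-1})\setminus X^*$, superadditivity of marginals with respect to the fixed base $X_{k-1}$ gives $\sum_{i\in I} f(i|X_{k-1})\ge f(X_{k-1}\cup I)-f(X_{k-1})$, while telescoping the deletion of $J$ from $Y_{k-1}$ together with diminishing returns gives $-\sum_{j\in J} f(j|Y_{k-1}-j)\ge f(Y_{k-1}\setminus J)-f(Y_{k-1})$. Since both $X_{k-1}\cup I$ and $Y_{k-1}\setminus J$ equal $X^*_{k-1}$, the bracket is at least $2f(X^*_{k-1})-f(X_{k-1})-f(Y_{k-1})=2\big(f(X^*_{k-1})-F_{k-1}\big)$, yielding $\mathbb{E}_r[N^k_r]\le \tfrac{n_{k-1}}{2}-\tfrac{f(X^*_{k-1})-F_{k-1}}{t}$. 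The bound for $M^k_r$ follows from the symmetric argument for A1 with $X_*$ and its contraction in place of $X^*$.

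The main obstacle is precisely this reference-set issue. When no mistake has occurred before the $k$th iteration we have $X^*\in[X_{k-1},Y_{k-1}]$, hence $X^*_{k-1}=X^*$ and the estimate above is exactly the claimed bound; in particular for $k=1$ we have $X_0=\emptyset$, $Y_0=N$, $X^*_0=X^*$, and it collapses to Theorem~\ref{thm5}, recovering it as the special case. Care is needed to justify passing from $f(X^*_{k-1})$ to the stated $f(X^*)$ once earlier mistakes have moved $X^*$ outside the lattice, and (exactly as in Theorem~\ref{thm5}) to note that the linear probability expressions are the true uniform-noise probabilities only when the relevant free-part marginals lie in $[-t,t]$; verifying these two points is the delicate part of the argument, whereas the submodular bounding of the bracket is routine.
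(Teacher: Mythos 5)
Your proposal follows what is essentially the paper's own route: the paper gives no standalone proof of Theorem~\ref{thm6}, saying only that it is obtained ``using similar methods'' as Theorem~\ref{thm5}, and your transplant of that proof onto the working lattice $[X_{k-1},Y_{k-1}]$ --- the per-element probabilities $\tfrac{t-f(i|X_{k-1})}{2t}$ and $\tfrac{t+f(j|Y_{k-1}-j)}{2t}$, linearity of expectation, and then the submodular telescoping bounds $\sum_{i\in I}f(i|X_{k-1})\ge f(X^*_{k-1})-f(X_{k-1})$ and $-\sum_{j\in J}f(j|Y_{k-1}-j)\ge f(X^*_{k-1})-f(Y_{k-1})$ with $X^*_{k-1}\triangleq X_{k-1}\cup(X^*\cap Y_{k-1})$ --- is exactly the intended argument, executed correctly.

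The caveats you flag are genuine, and the first deserves emphasis because it is a gap in the theorem as stated rather than in your write-up: submodularity can only deliver $f(X^*_{k-1})$, the contraction of $X^*$ into the current lattice, and since $f(X^*_{k-1})\le f(X^*)$ your bound $\tfrac{n_{k-1}}{2}-\tfrac{f(X^*_{k-1})-F_{k-1}}{t}$ is \emph{weaker} than the claimed $\tfrac{n_{k-1}}{2}-\tfrac{f(X^*)-F_{k-1}}{t}$; the claimed form only follows when $X^*\in[X_{k-1},Y_{k-1}]$, i.e.\ when no mistake has occurred in earlier iterations (which in particular covers $k=1$ and recovers Theorem~\ref{thm5}). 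There is one further subtlety you do not mention: in A3/A4 the noise $r$ is drawn once and reused across all iterations, so conditional on the algorithm having reached $[X_{k-1},Y_{k-1}]$, the surviving coordinates $r(i)$ are no longer uniform on $[-t,t]$ --- element $i$ survived precisely because $r(i)\ge -f(i|X_j)$ and $r(i)\le -f(i|Y_j-i)$ for all $j<k-1$. Your per-element probability formulas are therefore valid only under a ``fresh noise at iteration $k$'' (or unconditional-on-history) reading of the statement; this is the same reading the paper implicitly adopts, but it is the one real structural difference between iteration $k$ and iteration $1$, and a fully rigorous proof would have to either restate the theorem in that restart form or handle the conditioning explicitly.
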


Theorem \ref{thm5} implies that the expected number of mistakenly reduced elements in the first iteration will approach $\frac{n}{2}$ as the perturbation scale $t$ increases. This is consistent with the intuition. Let $t \rightarrow \infty$, then $g \rightarrow r$. Each element will be randomly selected to be added or eliminated with probability $\frac{1}{2}$, so the expected number of mistakenly reduced elements is $\frac{n}{2}$.

\begin{remk}
\label{remk1}
When $t$ is large enough, most elements will be reduced in the first iteration, i.e., $N_r \approx N^1_r$. Let $t =  \frac{2(f(X^*) - F)}{n(1-2\varepsilon)}$, where $\varepsilon > 0$, by (\ref{eq13}) we have $\mathbb{E}_r[N_r] \approx \mathbb{E}_r[N^1_r] \le \varepsilon n$, which indicates that if $t = \frac{2(f(X^*) - F)}{n(1-2\varepsilon)}$ is a large enough perturbation scale, then the number of  mistakenly reduced elements can be desirably upper bounded.
\end{remk}

\begin{remk}
\label{remk2}
Suppose $t$ is large enough and $N_r \approx N^1_r$. With the result of Theorem \ref{thm2}, we have
\begin{equation*}
    f(X^*) - f(X_p^*) < tN_r \approx \frac{nt}{2} - (f(X^*) - F).
\end{equation*}
Let $t = \frac{2[(1+\delta)f(X^*) - F]}{n}$ where $\delta > 0$, we have $f(X_p^*) > (1-\delta)f(X^*)$ from above. This means if there is some relationship between the optimum and the perturbation ($t = \frac{2[(1+\delta)f(X^*) - F]}{n}$ is a large perturbation scale), then the previous performance loss results can be transformed into approximation ratios.
\end{remk}

\begin{figure*}[t]
\centering
\includegraphics[width=1\linewidth]{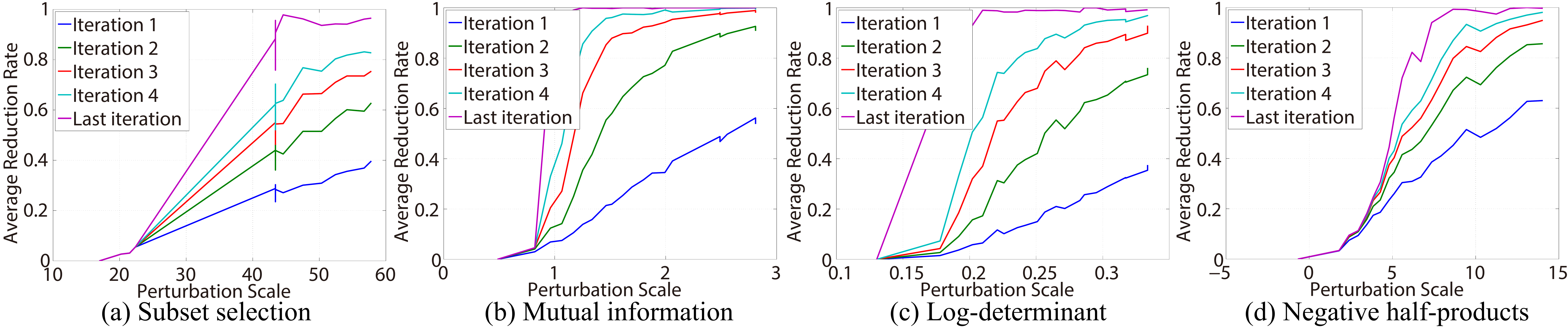}
\caption{Average Reduction Rates of Maximization}
\label{fig1}
\end{figure*}

\section{EXPERIMENTAL RESULTS}
\label{sec:results}
For reducible submodular functions, by incorporating reduction into optimization methods, favorable performance has been achieved \cite{fujishige2005submodular,goldengorin2009maximization,iyer2013fast,mei2015unconstrained}. In our experiments, we mainly focus on (nearly) irreducible submodular functions, as listed below.

\paragraph{Subset Selection Function.}

The objective function \cite{lin2009select,iyer2013fast} is irreducible. Given $M \in \mathbb{R}_+^{n \times n}$, $f(X) \triangleq \sum_{i \in N}\sum_{j \in X}{M_{ij}} - \lambda\sum_{i,j \in X}{M_{ij}}$, where $\lambda \in [0.5,1]$. We set $n = 100$, $\lambda = 0.7$, and randomly generate symmetric matrix $M$ in $(0,1)^{n \times n}$, and set $M_{ii} = 1$, $\forall i \in N$.

\paragraph{Mutual Information Function.}

Given $n$ random vectors $X_1, X_2, \dots, X_n$, define $h(X)$ as the entropy of random variables $\{X_i | i \in N\}$, which is a highly reducible submodular function. The symmetrization \cite{bach2013learning} of $h$ leads to the mutual information $f(X) \triangleq h(X) + h(N \setminus X)$, which is irreducible. We set $n = 100$, and randomly generate $\{X_i \ | \ i = 1, 2, \dots, n\}$.

\paragraph{Log-Determinant Function.}

Given a positive definite matrix $K \in \mathbb{S}_{++}^n$, the determinant \cite{kulesza2012determinantal} is log-submodular. The symmetrization of log-determinant is $f(X) \triangleq \log{\det{(K_X)}} + \log{\det{(K_{N \setminus X})}}$, where $K_X \triangleq [K_{ij}]_{i,j \in X}$, $\forall X \subseteq N$. We set $n = 100$. We randomly generate $n$ data points and compute the $n \times n$ similarity matrix as the positive definite matrix $K$.

\paragraph{Negative Half-Products Function.}

The objective \cite{boros2002pseudo} is $f(X) \triangleq c(X) - \sum_{i,j \in X, i<j}{a(i)b(j)}$, where $a, b, c$ are non-negative vertors. When $c$ is not non-negative, $f$ can be highly reducible \cite{mei2015unconstrained}. Here $c$ is non-negative, and $f$ is nearly irreducible. The reduction rate of A1 (A2) is about $1\%$. We set $n = 100$, and randomly generate $a, b$ in $(0.1,0.5)^n$ and $c$ in $(1,5)^n$.

\begin{figure*}[t]
\centering
\includegraphics[width=1\linewidth]{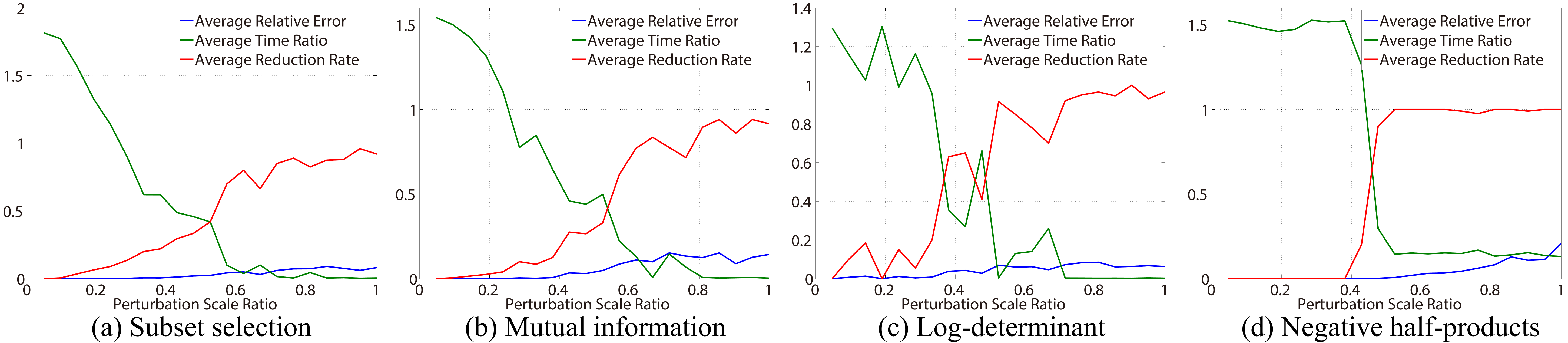}
\caption{Maximization Results Using Branch-and-Bound Method \cite{goldengorin1999data} (Exact Solver)}
\label{fig2}
\end{figure*}

\begin{figure*}[t]
\centering
\includegraphics[width=1.0\linewidth]{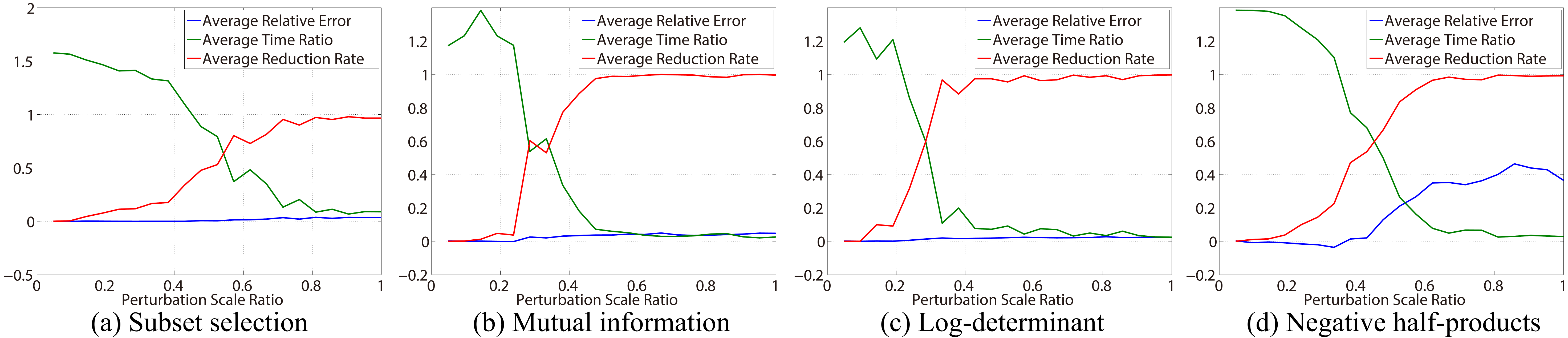}
\caption{Maximization Results Using Random Bi-directional Greedy \cite{buchbinder2012tight} (Approximate Solver)}
\label{fig3}
\end{figure*}

\subsection{Perturbation Scale}

In Theorem \ref{thm1}, we lower bound the expectation of reduction rate using the expectation of reduction rate after the first iteration of A1 (A2). It is recently reported that for reducible submodular minimization, this bound is often not tight in practice \cite{iyer2013fast}. Given that our method is actually transforming irreducible functions to reducible ones, it is reasonable to borrow experience from reducible cases. We conjecture that relatively small reduction rates after the first iteration would be sufficient for high reduction rates after the last iteration, thereby we only need to choose small perturbation scales to obtain desirable reducibility gains.
We empirically verify the conjecture as shown in Figure \ref{fig1}. Appropriate perturbation scales $t$ are chosen so that the reduction rates after the last iteration are changing from $0$ to nearly $1$. Given a certain perturbation scale, we repeatedly generate $r$ for $10$ times and record the average reduction rates of A2 after iteration $1$-$4$ and the last iteration. We observe that A2 terminates within $10$ iterations for all objective functions.

We defer similar results for minimization to the supplementary material. As conjectured, we learn from Figure \ref{fig1} that the gap between the average reduction rates after the first iteration and the last iteration is always large in practice. Hence, we can choose $t$ to get appropriate reduction rates in expectation (e.g., $0.3$) after the first iteration, so as to obtain potentially high final reduction rates.
Although we can empirically utilize the gap of reducibility gain to choose relative small perturbation scales, we would like to point out that theoretically determining the reduction rates in expectation after the last iteration given certain perturbation scales is still an open problem.

\subsection{Optimization Results}

We implement our method using SFO toolbox \cite{krause2010sfo}.

For maximization, we compare A4 with both exact and approximate methods, as exact methods usually cannot terminate in acceptable time with larger input scales. Denote the outputs of the proposed A4 and the existing method as $X^p$ and $X^e$, respectively. Also denote the running time as $T_p$ and $T_e$. We measure the performance loss using \emph{relative error}, which is defined as $E_r \triangleq \frac{|f(X^e)-f(X^p)|}{|f(X^e)|}$. When $X^e$ is exact, $1 - E_r$ is the approximation ratio. We measure the reducibility gain using both the reduction rate and the time ratio $T_p/T_e$. Small time ratios and relative errors indicate large reducibility gains and small performance losses, respectively.

We employ the branch-and-bound method \cite{goldengorin1999data} as the exact solver. Since it has exponential time complexity, we reset $n = 20$ so that it terminates within acceptable time. The results are shown in Figure \ref{fig2}. For comparison, we normalize the perturbation scale as follows. We define $M\{f,[S,T]\} \triangleq \max_{i \in T \setminus S}{\max\{f(i|S), -f(i|T-i)\}}$, and define the \emph{perturbation scale ratio} as $P(t) \triangleq \frac{t-m}{M-m}$. We change the perturbation scale $t$ in $[m,M]$ by varying $P(t)$ in $[0,1]$. We then randomly generate $10$ cases for each objective function and record the average relative errors, average reduction rates, and average time ratios for each perturbation scale ratio.
Figure \ref{fig3} shows the results compared with the random bi-directional greedy method \cite{buchbinder2012tight}, which is used as the approximate solver. Note that $n$ is set to 100. For each case, we firstly run A2 once, and then run the random method $5$ times on both the original and the reduced lattice, and record the best solutions.

According to Figure \ref{fig2} and Figure \ref{fig3}, when the perturbation scale ratio is smaller than $0.3$, the time ratio is larger than $1$. This is because the small reducibility gain cannot make the combination methods more efficient than before. As the perturbation scale ratio increases, the reduction rate increases and the time ratio decreases as expected. Meanwhile, the relative error increases gently when $P(t)$ increases, indicating that there exist useful intervals, in which the perturbation scales can lead to large reducibility gains and small performance losses.

For minimization, since the subset selection and the mutual information function have trivial zero optimal values, \emph{i.e.}, $f(X_*) = f(\emptyset) = 0$, we use the later two as objective functions. We employ the Fujishige-Wolfe minimum-norm point algorithm \cite{fujishige2011submodular} as the exact solver. All the settings are the same as those of maximization. The results of minimization are shown in Figure \ref{fig4}. We note that for negative half-products function, the useful interval of perturbation scales is smaller than those of other functions. According to Remark \ref{remk2}, as the marginal gains are relatively large compared to the optimal value, it is inappropriate to choose large perturbation scales in this case.

\begin{figure}[h]
\centering
\includegraphics[width=1.0\linewidth]{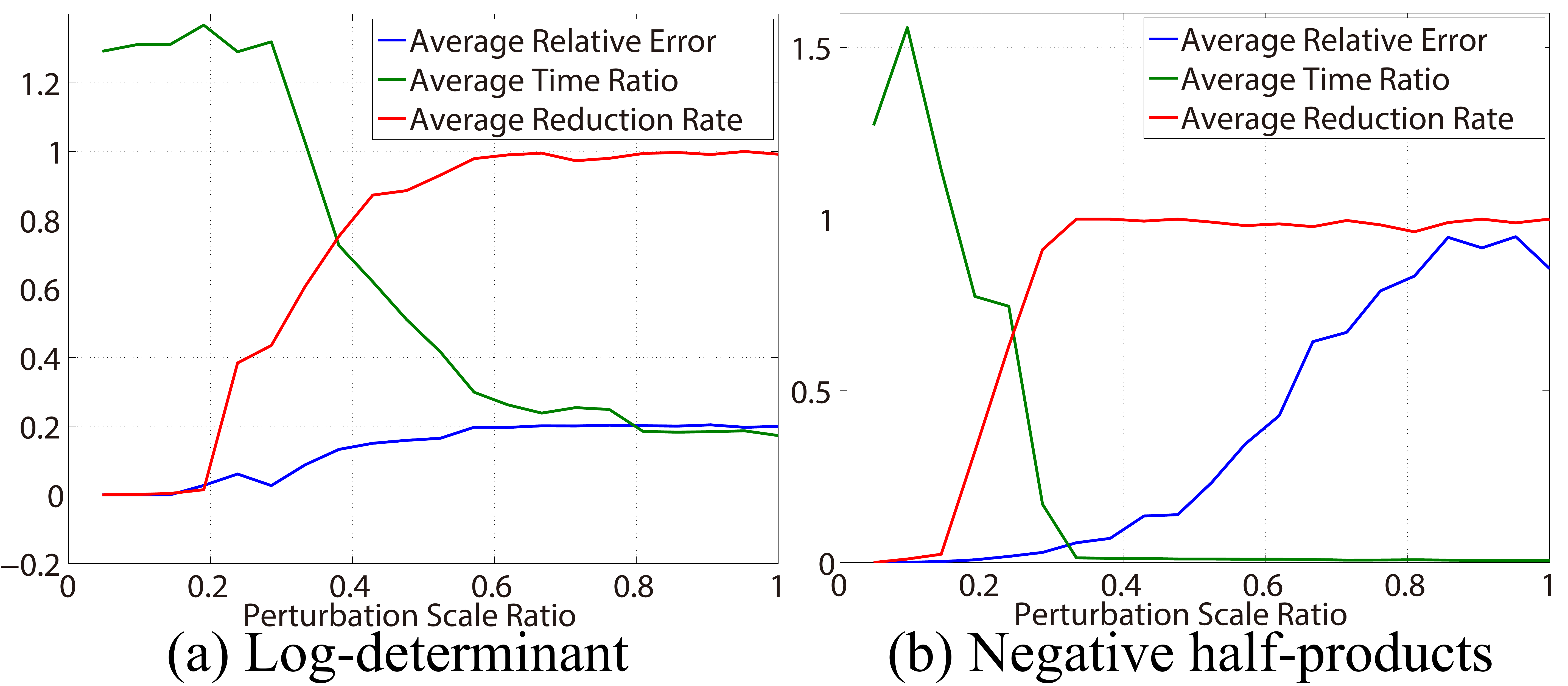}
\caption{Minimization Results}
\label{fig4}
\end{figure}

\section{RELATED WORK}
\label{sec:related_work}
In this section, we review some existing works related to solution space reduction for submodular optimization. For P1, Fujishige \cite{fujishige2005submodular} firstly proves $\mathcal{X}_{min} \subseteq [A,B]$, where $A = \{ i \in N \ | \ f(i|\emptyset) < 0 \}$ and $B = \{ j \in N \ | \ f(j|N-j) \le 0\}$. Note that actually $[A,B] = [X_1,Y_1]$, which is the working lattice of A1 after its first iteration. Recently, Iyer et al. \cite{iyer2013fast} propose the discrete Majorization-Minimization (MMin) framework for P1. They prove that by choosing appropriate supergradients, MMin is identical with A1. For P2, Goldengorin \cite{goldengorin2009maximization} proposes the Preliminary Preservation Algorithm (PPA), which is identical with A2. For general cases, Mei et al. \cite{mei2015unconstrained} prove that the two algorithms work for quasi-submodular functions. Beyond unconstrained problems, for uniform matroid constrained monotone submodular function optimization, Wei et al. \cite{wei2014fast} propose similar pruning method in which the reduced ground set contains all the original solutions of the \emph{greedy algorithm}.

\section{CONCLUSIONS}
\label{sec:conclusion}
In this paper, we introduce the reducibility of submodularity, which can improve the efficiency of submodular optimization methods. We then propose the perturbation-reduction framework, and demonstrate its advantages theoretically and empirically. We analyze the reducibility gain and performance loss given perturbation scales. Experimental results show that there exists practically useful intervals, and choosing perturbation scales from them enables us to significantly accelerate the existing methods with only small performance loss. For the future work, we would like to study the reducibility of submodular functions in constrained problems.

\subsubsection*{Acknowledgements}

Jincheng Mei would like to thank Csaba Szepesv{\'a}ri for fixing the proof of Theorem \ref{thm4}. Bao-Liang Lu was supported by the National Basic Research Program of China (No. 2013CB329401), the National Natural Science Foundation of China (No. 61272248) and the Science and Technology Commission of Shanghai Municipality (No. 13511500200). Asterisk indicates the corresponding author.

{\small
\bibliographystyle{plain}
\bibliography{aistats2016bib}
}

\newpage
\appendix
\section*{Appendix}
\section{Proof of Proposition \ref{prop1}}
For Algorithm \ref{algo1}, the proof can be found in \cite{iyer2013fast}. For Algorithm \ref{algo2}, the proof can be found in \cite{goldengorin2009maximization}. A proof using weaker assumption of quasi-submodular function $f$ can be found in \cite{mei2015unconstrained}. We prove Proposition \ref{prop1} here for completeness.

\begin{proof}
\textbf{Algorithm \ref{algo1}}. Obviously $\mathcal{X}_{min} \subseteq [X_0,Y_0]$. Suppose $\mathcal{X}_{min} \subseteq [X_k,Y_k]$, we now prove $\mathcal{X}_{min} \subseteq [X_{k+1},Y_{k+1}]$. Suppose $X_* \in \mathcal{X}_{min}$ is a minimum of $f$, then we have $X_k \subseteq X_* \subseteq Y_k$. For $\forall i \in U_k$, if $i \not\in X_*$, by submodularity, we have $f(i|X_*) \le f(i|X_k) < 0$, \emph{i.e.}, $f(X_*+i) < f(X_*)$, which contradicts with the optimality of $X_*$. So we have $U_k \subseteq X_*$, and $X_{k+1} = X_k \cup U_k \subseteq X_*$.
$\forall j \in D_k$, if $j \in X_*$, by submodularity, we have $f(j|X_*-j) \ge f(j|Y_k-j) > 0$, \emph{i.e.}, $f(X_*) > f(X_*-j)$, which also contradicts with the optimality of $X_*$. Therefore we have $D_k \subseteq N \setminus X_*$, and $X_* \subseteq Y_{k+1} = Y_k \setminus D_k$.

Now we have $X_{k+1} \subseteq X_* \subseteq Y_{k+1}$. Since $X_*$ can be an arbitrary element of $\mathcal{X}_{min}$, we have $\mathcal{X}_{min} \subseteq [X_{k+1},Y_{k+1}]$.

\textbf{Algorithm \ref{algo2}}. Obviously $\mathcal{X}_{max} \subseteq [X_0,Y_0]$. Suppose $\mathcal{X}_{max} \subseteq [X_k,Y_k]$, we now prove $\mathcal{X}_{max} \subseteq [X_{k+1},Y_{k+1}]$. Suppose $X^* \in \mathcal{X}_{max}$ is a maximum of $f$, then we have $X_k \subseteq X^* \subseteq Y_k$. $\forall i \in U_k$, if $i \in X^*$, by submodularity, we have $f(i|X^*-i) \le f(i|X_k) < 0$, \emph{i.e.}, $f(X^*) < f(X^*-i)$, which contradicts with the optimality of $X^*$. So we have $U_k \subseteq N \setminus X^*$, and $X^* \subseteq Y_{k+1} = Y_k \setminus U_k$.
$\forall j \in D_k$, if $j \not\in X^*$, by submodularity, we have $f(j|X^*) \ge f(j|Y_k-j) > 0$, \emph{i.e.}, $f(X^*+j) > f(X^*)$, which also contradicts with the optimality of $X^*$. So we have $D_k \subseteq X^*$, and $X_{k+1} = X_k \cup D_k \subseteq X^*$.

Now we have $X_{k+1} \subseteq X^* \subseteq Y_{k+1}$. Since $X^*$ can be an arbitrary element of $\mathcal{X}_{max}$, we have $\mathcal{X}_{max} \subseteq [X_{k+1},Y_{k+1}]$.
\end{proof}

\section{Reduction Rate of Algorithm \ref{algo1}}

Figure \ref{fig1_supp} shows the reduction rates of Algorithm \ref{algo1}. All the settings are the same as those of Algorithm \ref{algo2} in the paper.

\section{More Experimental Results}

\subsection{Results of Maximization}

In the paper we use the random bi-directional greedy method as the approximate solver for maximization. We also report the results of random permutation \cite{iyer2013fast} and random local search \cite{iyer2013fast}. The settings are the same as those in the paper. The results are shown in Figure \ref{fig2_supp} and Figure \ref{fig3_supp}.

\subsection{Results Using Real Data}

Finally, we compare the results on real data. The objective function is the log-determinant function. For each test case, we randomly select $100$ samples from the CIFAR dataset \cite{krizhevsky2009learning}, and then we compute the similarity matrix as the positive definite matrix $K$. Other settings are the same as those in the paper. The results are shown in Figure \ref{fig4_supp}.

In Figure \ref{fig4_supp}, the first three subfigures show the results of maximization using random local search, random permutation, and random bi-directional greedy, respectively. The last subfigure presents the results of minimization using the Fujishige-Wolfe minimum-norm point algorithm \cite{fujishige2011submodular}.

\begin{figure*}[t]
\centering
\includegraphics[width=1\linewidth]{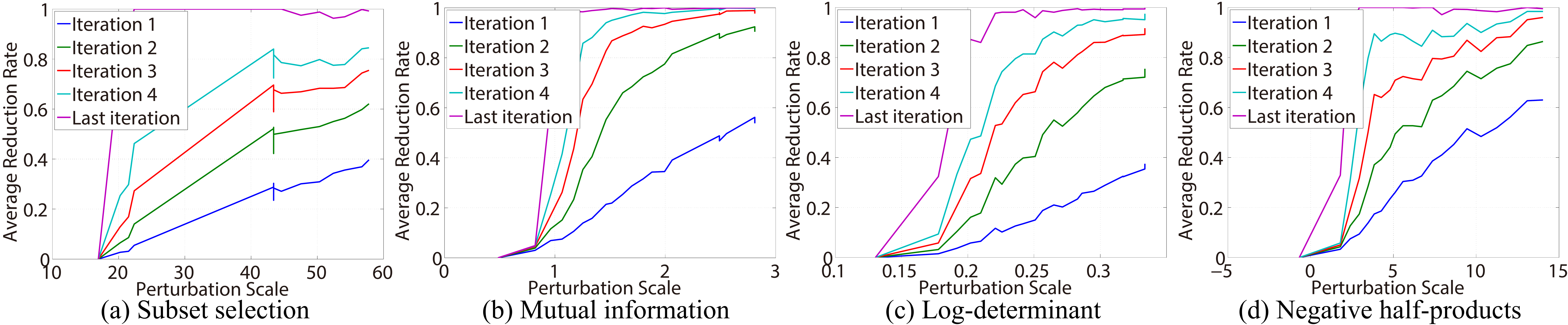}
\caption{Average Reduction Rates of Minimization}
\label{fig1_supp}
\end{figure*}

\begin{figure*}[t]
\centering
\includegraphics[width=1\linewidth]{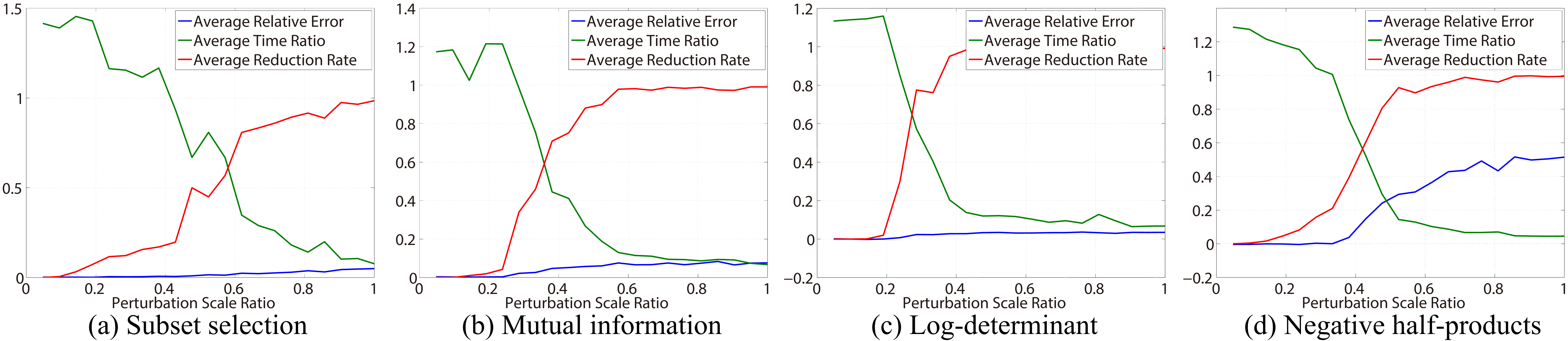}
\caption{Maximization Results Using Random Permutation \cite{iyer2013fast}}
\label{fig2_supp}
\end{figure*}

\begin{figure*}[t]
\centering
\includegraphics[width=1\linewidth]{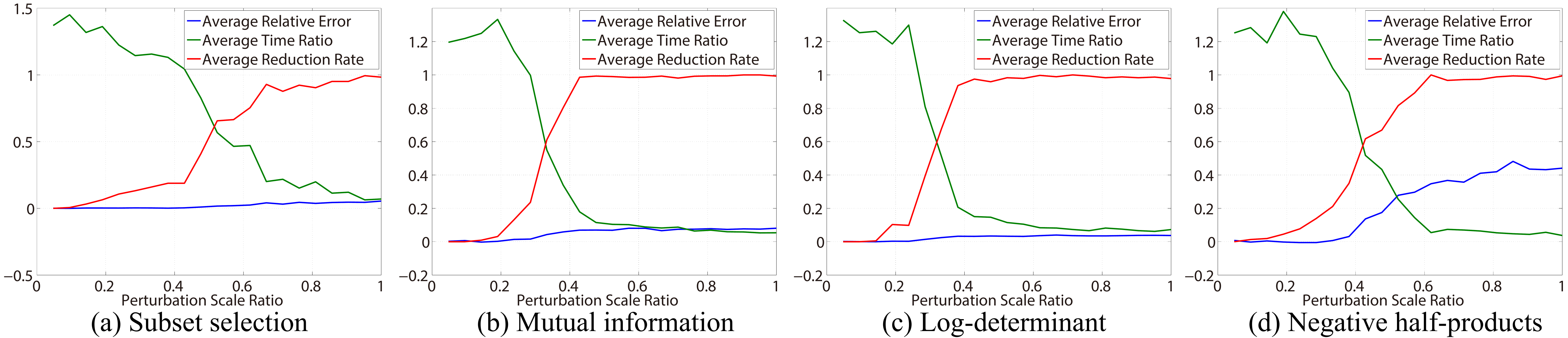}
\caption{Maximization Results Using Random Local Search \cite{iyer2013fast}}
\label{fig3_supp}
\end{figure*}

\begin{figure*}[t]
\centering
\includegraphics[width=1\linewidth]{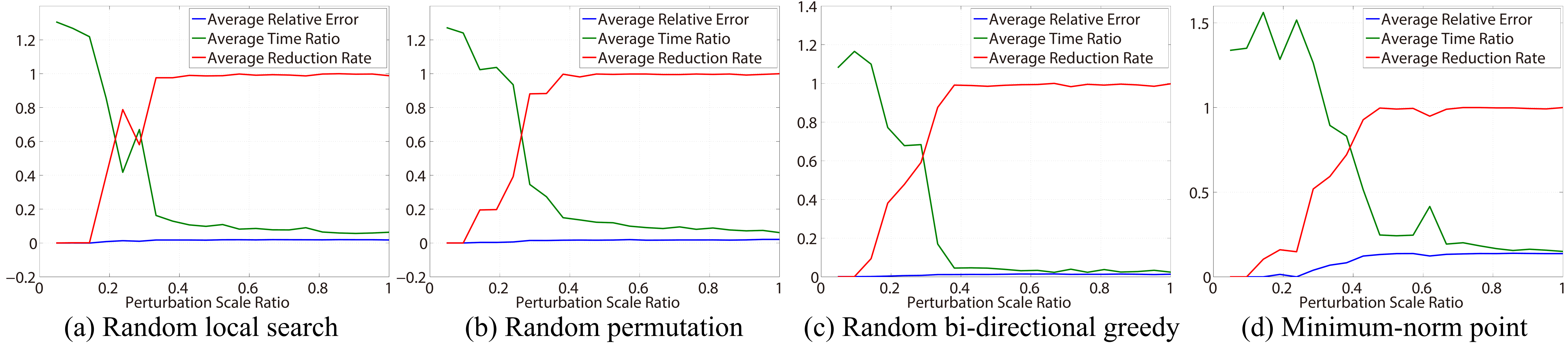}
\caption{Results of Log-determinant Function Using CIFAR Dataset}
\label{fig4_supp}
\end{figure*}

\end{document}